\newif\ifJOURNAL
\newif\ifCONF
\newif\ifarXiv
\newif\ifWP
\newif\ifFULL
\newif\ifnotCONF	% derivative conditional
\newif\ifnotarXiv	% derivative conditional
\newif\ifTR		% derivative conditionals (TR = arXiv or WP or FULL)
\newif\ifnotTR
\newlength{\picturewidth}
  \newcommand{\Extra}[1]{}
  \newcommand{\Extra}[1]{}
  \newcommand{\Extra}[1]{}
  \newcommand{\Extra}[1]{\red{#1}}
  \newcommand{\red}[1]{\textcolor{red}{#1}}
  \newcommand{\bluebegin}{\begingroup\color{blue}}
  \newcommand{\blueend}{\endgroup}
\DeclareMathOperator{\Prob}{\mathbb{P}}
\DeclareMathOperator{\bin}{bin}    % distribution function of the binomial distribution
\DeclareMathOperator{\Bet}{Bet}    % distribution function of the beta distribution
\DeclareMathOperator{\overbin}{\overline{bin}}
  \newtheorem{lemma}{Lemma}
  \newtheorem{proposition}{Proposition}
  \newtheorem*{remark}{Remark}
  \title[Conditional validity of inductive conformal predictors]{Conditional validity of inductive conformal predictors}
  \author{\Name{Vladimir Vovk} \Email{v.vovk@rhul.ac.uk}\\
  \addr Computer Learning Research Centre,
    Department of Computer Science,
    Royal Holloway, University of London,
    Egham, Surrey TW20 0EX, UK
  }
  \title{Conditional validity of inductive conformal predictors}
  \author{Vladimir Vovk\\
  \texttt{v.vovk{\rm@}rhul.ac.uk}\\
  \url{http://vovk.net}}
  \title{Conditional validity of inductive conformal predictors}
  \author{Vladimir Vovk}
  \title{Conditional validity of inductive conformal predictors}
  \author{Vladimir Vovk\\
  \texttt{vovk{\rm@}cs.rhul.ac.uk}\\
  \url{http://vovk.net}}
\begin{document}
\maketitle

\begin{abstract}
  Conformal predictors are set predictors that are automatically valid
  in the sense of having coverage probability equal to or exceeding
  a given confidence level.
  Inductive conformal predictors are a computationally efficient version of conformal predictors
  satisfying the same property of validity.
  However, inductive conformal predictors have been only known to control
  unconditional coverage probability.
  This paper explores various versions of conditional validity
  and various ways to achieve them using inductive conformal predictors and their modifications.
  % and proves an impossibility result
\end{abstract}

\ifCONF
  \begin{keywords}
    Inductive conformal predictors, conditional validity, batch mode of learning,
    boosting, MART, spam detection
  \end{keywords}
\fi

\section{Introduction}
\label{sec:introduction}

This paper continues study of the method of conformal prediction\ifCONF\
  (Vovk et al.\ \citeyear{vovk/etal:2005book}, Chapter~2)\fi\ifnotCONF,
  introduced in \citet{vovk/etal:1999-full} and \citet{saunders/etal:1999-full}
  and further developed in \citet{vovk/etal:2005book}\fi.
An advantage of the method is that its predictions
(which are set rather than point predictions)
automatically satisfy a finite-sample property of validity.
Its disadvantage is its relative computational inefficiency in many situations.
A modification of conformal predictors,
called inductive conformal predictors\ifCONF\
  (Vovk et al.\ \citeyear{vovk/etal:2005book}, Section~4.1) aims at \fi\ifnotCONF,
  was proposed in \citet{papadopoulos/etal:2002ICMLA-full,papadopoulos/etal:2002ECML-full}
  with the purpose of \fi
improving on the computational efficiency of conformal predictors.

Most of the literature on conformal prediction
studies the behavior of set predictors in the online mode of prediction,
perhaps because the property of validity can be stated in an especially strong form
in the on-line mode \ifCONF (\citealt{vovk/etal:2005book}, Proposition~2.3)\fi\ifnotCONF
 (as first shown in \citealt{vovk:2002FOCS-full})\fi.
The online mode, however, is much less popular in applications of machine learning
than the batch mode of prediction.
This paper follows the recent papers by \citet{lei/etal:2011}, \citet{lei/wasserman:2012},
and \citet{lei/etal:2012} studying properties of conformal prediction in the batch mode;
we, however, concentrate on inductive conformal prediction
(also considered in \citealt{lei/etal:2012}).
\ifnotCONF
  The performance of inductive conformal predictors in the batch mode
  is illustrated
  using the well-known \texttt{Spambase} data set;
  for earlier empirical studies of conformal prediction in the batch mode
  see, e.g., \citet{vanderlooy/etal:2007-full}.
  The conference version of this paper is published as \citet{vovk:2012ACML-short}.
\fi
\ifCONF
  Its full version is published as \citet{vovk:arXiv1209-short}.
\fi

We will usually be making the \emph{assumption of randomness},
which is standard in machine learning and nonparametric statistics:
the available data is a sequence of \emph{examples}
generated independently from the same probability distribution $P$.
(In some cases we will make the weaker assumption of exchangeability;
for some of our results even weaker assumptions,
such as conditional randomness or exchangeability, would have been sufficient.)
Each example consists of two components: an \emph{object} and a \emph{label}.
We are given a \emph{training set} of examples and a new object,
and our goal is to predict the label of the new object.
(If we have a whole \emph{test set} of new objects,
we can apply the procedure for predicting one new object to each of the objects
in the test set.)

The two desiderata for inductive conformal predictors are their validity and efficiency:
validity requires that the coverage probability of the prediction sets
should be at least equal to a preset confidence level,
and efficiency requires that the prediction sets should be as small as possible.
However, there is a wide variety of notions of validity,
since the ``coverage probability'' is, in general, conditional probability.
The simplest case is where we condition on the trivial $\sigma$-algebra,
i.e., the probability is in fact unconditional probability,
but several other notions of conditional validity are depicted in Figure~\ref{fig:cube},
where T refers to conditioning on the training set,
O to conditioning on the test object, and L to conditioning on the test label.
The arrows in Figure~\ref{fig:cube} lead from stronger to weaker notions of conditional validity;
U is the sink and TOL is the source (the latter is not shown).
\ifFULL\bluebegin
  The arrows do not mean implications:
  all our notions of validity depend on sufficiently many parameters to make the overall picture messy.
\blueend\fi

\begin{figure}[tb]
  \begin{center}
    \input{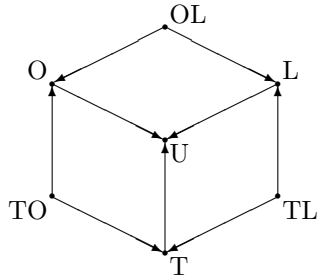}
  \end{center}

\vspace{-1mm}

\caption{Eight notions of conditional validity.
  The visible vertices of the cube are U (unconditional),
  T (training conditional),
  O (object conditional),
  L (label conditional),
  OL (example conditional),
  TL (training and label conditional),
  TO (training and object conditional).
  The invisible vertex is TOL (and corresponds to conditioning on everything).
\label{fig:cube}}
\end{figure}

\ifFULL\bluebegin
  It appears that the vertex TO is fundamental;
  e.g., in the case of classification
  a TO conditionally valid and efficient predictor
  contains all information about the $P$-conditional probability of the new label given the new object.
  (In this ideal situation requiring L-validity is too much;
  perhaps very simple examples can show this.)

  In the setting when there are no labels
  (as in \citealt{lei/etal:2011})
  the cube of Figure~\ref{fig:cube} becomes a square.
\blueend\fi

Inductive conformal predictors will be defined in Section~\ref{sec:ICP}.
They are automatically valid, in the sense of unconditional validity.
It should be said that, in general, the unconditional error probability
is easier to deal with than conditional error probabilities;
e.g., the standard statistical methods of cross-validation and bootstrap
provide decent estimates of the unconditional error probability
but poor estimates for the training conditional error probability:
see \citet{hastie/etal:2009},
% Chapter~7 and especially
Section~7.12.

In Section~\ref{sec:T} we explore training conditional validity of inductive conformal predictors.
Our simple results (Propositions~\ref{prop:2-parameter} and ~\ref{prop:2-parameter-exact}) are of the PAC type,
involving two parameters:
the target training conditional coverage probability $1-\epsilon$
and the probability $1-\delta$ with which $1-\epsilon$ is attained.
They show that inductive conformal predictors achieve training conditional validity automatically
(whereas for other notions of conditional validity the method has to be modified).
We give self-contained proofs of Propositions~\ref{prop:2-parameter} and~\ref{prop:2-parameter-exact},
but Appendix~A \ifCONF of \citet{vovk:arXiv1209-short} \fi
explains how they can be deduced from classical results about tolerance regions.

In the following section, Section~\ref{sec:CICP}, we introduce a conditional version
of inductive conformal predictors and explain, in particular, how it achieves
label conditional validity\ifFULL\bluebegin;
  in future also training and label conditional validity\blueend\fi.
Label conditional validity is important as it allows the learner to control
the set-prediction analogues of false positive and false negative rates.
Section~\ref{sec:O} is about object conditional validity
and its main result (a version of a lemma in \citealt{lei/wasserman:2012}) is negative:
precise object conditional validity cannot be achieved in a useful way
unless the test object has a positive probability.
Whereas precise object conditional validity is usually not achievable,
we should aim for approximate and asymptotic object conditional validity
when given enough data
(cf.\ \citealt{lei/wasserman:2012}).

Section~\ref{sec:experiments} reports on the results of empirical studies
for the standard \texttt{Spambase} data set
(see, e.g., \citealt{hastie/etal:2009}, Chapter 1, Example 1, and Section 9.1.2).
Section~\ref{sec:ROC} discusses close connections between an important class of ICPs
and ROC curves.
Section~\ref{sec:conclusion} concludes\ifnotCONF\
% states some open problems
and Appendix~A discusses connections with the classical theory of tolerance regions
(in particular, it explains how Propositions~\ref{prop:2-parameter} and ~\ref{prop:2-parameter-exact}
can be deduced from classical results about tolerance regions)\fi.

\section{Inductive conformal predictors}
\label{sec:ICP}

The example space will be denoted $\mathbf{Z}$;
it is the Cartesian product $\mathbf{X}\times\mathbf{Y}$ of two measurable spaces,
the object space and the label space.
In other words, each example $z\in\mathbf{Z}$ consists of two components:
$z=(x,y)$, where $x\in\mathbf{X}$ is its object and $y\in\mathbf{Y}$ is its label.
Two important special cases are the problem of \emph{classification},
where $\mathbf{Y}$ is a finite set (equipped with the discrete $\sigma$-algebra),
and the problem of \emph{regression}, where $\mathbf{Y}=\mathbb{R}$.

Let $(z_1,\ldots,z_l)$ be the training set, $z_i=(x_i,y_i)\in\mathbf{Z}$.
We split it into two parts,
the \emph{proper training set} $(z_1,\ldots,z_m)$ of size $m<l$
and the \emph{calibration set} of size $l-m$.
An \emph{inductive conformity $m$-measure} is a measurable function
$A:\mathbf{Z}^m\times\mathbf{Z}\to\mathbb{R}$;
the idea behind the \emph{conformity score} $A((z_1,\ldots,z_m),z)$
is that it should measure how well $z$ conforms to the proper training set.
A standard choice is
\begin{equation}\label{eq:score}
  A((z_1,\ldots,z_m),(x,y))
  :=
  \Delta(y,f(x)),
\end{equation}
where $f:\mathbf{X}\to\mathbf{Y}'$ is a prediction rule found from $(z_1,\ldots,z_m)$
as the training set
and $\Delta:\mathbf{Y}\times\mathbf{Y}'\to\mathbb{R}$ is a measure of similarity between a label and a prediction.
Allowing $\mathbf{Y}'$ to be different from $\mathbf{Y}$
(often $\mathbf{Y}'\supset\mathbf{Y}$)
may be useful when the underlying prediction method gives additional information
to the predicted label;
e.g., the MART procedure used in Section~\ref{sec:experiments}
gives the logit of the predicted probability that the label is $1$.

\ifnotCONF
  \begin{remark}
    {\rm The idea behind the term ``calibration set'' is that this set allows us
    to calibrate the conformity scores for test examples
    by translating them into a probability-type scale.}
  \end{remark}
\fi

The \emph{inductive conformal predictor} (ICP) corresponding to $A$
is defined as the set predictor
\begin{equation}\label{eq:ICP}
  \Gamma^{\epsilon}(z_1,\ldots,z_l,x)
  :=
  \{y \mid p^y>\epsilon\},
\end{equation}
where $\epsilon\in[0,1]$ is the chosen \emph{significance level}
($1-\epsilon$ is known as the \emph{confidence level}),
the \emph{p-values} $p^y$, $y\in\mathbf{Y}$, are defined by
\begin{equation}\label{eq:p}
  p^y
  :=
  \frac
  {
    \left|\left\{
      i=m+1,\ldots,l \mid \alpha_i\le\alpha^y
    \right\}\right|
    +
    1
  }
  {l-m+1},
\end{equation}
and
\begin{equation}\label{eq:alphas}
  \alpha_i
  :=
  A((z_1,\ldots,z_m),z_i),
  \quad
  i=m+1,\ldots,l,
  \qquad
  \alpha^y
  :=
  A((z_1,\ldots,z_m),(x,y))
\end{equation}
are the conformity scores.
Given the training set and a new object $x$ the ICP predicts its label $y$;
it \emph{makes an error} if $y\notin\Gamma^{\epsilon}(z_1,\ldots,z_l,x)$.

The random variables whose realizations are $x_i$, $y_i$, $z_i$, $z$
will be denoted by the corresponding upper case letters
($X_i$, $Y_i$, $Z_i$, $Z$, respectively).
The following proposition of validity is almost obvious.

\begin{proposition}[\citealp{vovk/etal:2005book}, Proposition~4.1]
  \label{prop:validity-ICP}
  If random examples $Z_{m+1},\ldots,Z_l,$ $Z_{l+1}=(X_{l+1},Y_{l+1})$ are exchangeable  % CHANGE
  (i.e., their distribution is invariant under permutations),
  the probability of error $Y_{l+1}\notin\Gamma^{\epsilon}(Z_1,\ldots,Z_l,X_{l+1})$
  does not exceed $\epsilon$ for any $\epsilon$ and any inductive conformal predictor $\Gamma$.
\end{proposition}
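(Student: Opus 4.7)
The plan is to reduce the error event to a statement about the rank of a single conformity score among the calibration-plus-test scores, and then use exchangeability to bound the probability that this rank is small. First, I would unpack definitions (\ref{eq:ICP}) and (\ref{eq:p}): the error event $Y_{l+1}\notin\Gamma^{\epsilon}(Z_1,\ldots,Z_l,X_{l+1})$ is equivalent to $p^{Y_{l+1}}\le\epsilon$. Setting $\alpha_{l+1}:=A((Z_1,\ldots,Z_m),Z_{l+1})$ (which agrees with $\alpha^{Y_{l+1}}$ on the true label) and writing $N:=l-m$, the p-value for the true label can be rewritten as $p^{Y_{l+1}}=R/(N+1)$, where
\[
R := \left|\left\{i \in \{m+1,\ldots,l+1\} : \alpha_i\le\alpha_{l+1}\right\}\right|.
\]
The ``$+1$'' in the numerator of (\ref{eq:p}) has been absorbed by the automatic inclusion of $i=l+1$ (since $\alpha_{l+1}\le\alpha_{l+1}$).

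Next I would transfer exchangeability from examples to scores. The assumption gives exchangeability of $Z_{m+1},\ldots,Z_{l+1}$, and each $\alpha_i$ is produced by applying the same single-argument map $z\mapsto A((Z_1,\ldots,Z_m),z)$ to $Z_i$; the map does not depend on $i$, and the proper training set $(Z_1,\ldots,Z_m)$ is merely a parameter, not assumed exchangeable with the rest. Hence $\alpha_{m+1},\ldots,\alpha_{l+1}$ are exchangeable real-valued random variables.

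The final step is a uniform-rank bound. Fix any permutation-invariant tie-breaking rule (e.g.\ break ties by index) and let $r$ denote the resulting rank of $\alpha_{l+1}$ among $\alpha_{m+1},\ldots,\alpha_{l+1}$. Exchangeability of the scores together with the symmetry of the tie-breaking forces $r$ to be uniformly distributed on $\{1,\ldots,N+1\}$: each of the $N+1$ scores is equally likely to occupy any rank. Moreover any index counted by $r$ satisfies $\alpha_i\le\alpha_{l+1}$ and is therefore also counted by $R$, so $r\le R$ pointwise. Combining,
\[
\Pr\!\bigl(Y_{l+1}\notin\Gamma^{\epsilon}\bigr) = \Pr(R\le\epsilon(N+1)) \le \Pr(r\le\epsilon(N+1)) = \frac{\lfloor\epsilon(N+1)\rfloor}{N+1} \le \epsilon.
\]

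I expect no substantial obstacle. The only mildly delicate point is the handling of ties, which is precisely the reason the definition of the p-value uses ``$\le$'' rather than ``$<$''; if $A$ produces distinct values almost surely, then $r=R$ and the tie-breaking is unnecessary, while ties can only enlarge $R$ relative to $r$ and therefore only help the bound.
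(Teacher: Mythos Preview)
The paper itself does not prove this proposition; it calls the result ``almost obvious'' and cites Proposition~4.1 of \citet{vovk/etal:2005book}. Your argument is the standard one and is correct in outline, but your parenthetical tie-breaking example undermines it. You ask for a ``permutation-invariant tie-breaking rule'' and then offer ``break ties by index''; breaking ties by index is not permutation-invariant, and with the natural reading (largest index wins ties, and $l+1$ is the largest index) the resulting rank of $\alpha_{l+1}$ is exactly $R$ itself, so the claimed uniformity of $r$ fails (if all $\alpha_i$ coincide then $r=R=N+1$ deterministically) and the inequality $r\le R$ becomes vacuous.

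The fix is immediate: use \emph{random} tie-breaking, attaching i.i.d.\ uniforms $U_i$ independent of everything else and ranking the pairs $(\alpha_i,U_i)$ lexicographically; these pairs are exchangeable and almost surely distinct, so the rank of the $(l{+}1)$st pair is genuinely uniform on $\{1,\ldots,N+1\}$, and $r\le R$ still holds. Alternatively, bypass tie-breaking altogether: with $R_j:=|\{i:\alpha_i\le\alpha_j\}|$, exchangeability gives
\[
\Pr(R\le k)=\frac{1}{N+1}\sum_{j=m+1}^{l+1}\Pr(R_j\le k)=\frac{1}{N+1}\,\mathbb{E}\bigl|\{j:R_j\le k\}\bigr|\le\frac{k}{N+1},
\]
since deterministically at most $k$ indices $j$ can satisfy $R_j\le k$.
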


In practice the probability of error is usually close to $\epsilon$
(as we will see in Section~\ref{sec:experiments}).

\section{Training conditional validity}
\label{sec:T}

\ifFULL\bluebegin
  In \citet{lei/etal:2012} inductive conformal predictors are referred to
  as ``modified conformal method''.
\blueend\fi

As discussed in Section~\ref{sec:introduction},
the property of validity of inductive conformal predictors is unconditional.
The property of conditional validity can be formalized using a PAC-type 2-parameter definition.
It will be convenient to represent the ICP (\ref{eq:ICP})
in a slightly different form downplaying the structure $(x_i,y_i)$ of $z_i$.
Define $\Gamma^{\epsilon}(z_1,\ldots,z_l):=\{(x,y)\mid p^y>\epsilon\}$,
where $p^y$ is defined, as before, by (\ref{eq:p}) and (\ref{eq:alphas})
(therefore, $p^y$ depends implicitly on $x$).
Proposition~\ref{prop:validity-ICP} can be restated by saying that the probability of error
$Z_{l+1}\notin\Gamma^{\epsilon}(Z_1,\ldots,Z_l)$ does not exceed $\epsilon$
provided $Z_1,\ldots,Z_{l+1}$ are exchangeable.

\ifFULL\bluebegin
  Can we make the property of validity conditional on the training set?
  Not in the sense of the 1-parameter definition in \citet{vovk/etal:2005book}, of course:
  the probability of error conditional on the training set $(z_1,\ldots,z_l)$
  of a set predictor $\Gamma$ under the randomness assumption
  is $P(\Gamma(z_1,\ldots,z_l))$,
  and the interval
  $$
    \left(
      \inf_{(z_1,\ldots,z_l)}
      P(\Gamma(z_1,\ldots,z_l)),
      \sup_{(z_1,\ldots,z_l)}
      P(\Gamma(z_1,\ldots,z_l))
    \right)
  $$
  is typically as wide as $[0,1]$.
\blueend\fi

We consider a canonical probability space in which $Z_i=(X_i,Y_i)$,
$i=1,\ldots,l+1$,
are i.i.d.\ random examples.
A set predictor $\Gamma$
(outputting a subset of $\mathbf{Z}$ given $l$ examples and measurable in a suitable sense)
is \emph{$(\epsilon,\delta)$-valid}
if, for any probability distribution $P$ on $\mathbf{Z}$,
$$
  P^l
  \left(
    P(\Gamma(Z_1,\ldots,Z_l))\ge1-\epsilon
  \right)
  \ge
  1-\delta.
$$
It is easy to see that ICPs satisfy this property
for suitable $\epsilon$ and $\delta$.
\ifCONF\renewcommand{\thetheorem}{\arabic{theorem}a}\fi
\ifnotCONF\renewcommand{\theproposition}{\arabic{proposition}a}\fi
\begin{proposition}\label{prop:2-parameter}
  Suppose $\epsilon,\delta\in[0,1]$,
  \begin{equation}\label{eq:E}
    E
    \ge
    \epsilon + \sqrt{\frac{-\ln\delta}{2n}},
  \end{equation}
  where $n:=l-m$ is the size of the calibration set,
  and $\Gamma$ is an inductive conformal predictor.
  The set predictor $\Gamma^{\epsilon}$ is then $(E,\delta)$-valid.
  Moreover, for any probability distribution $P$ on $\mathbf{Z}$
  and any proper training set $(z_1,\ldots,z_m)\in\mathbf{Z}^m$,
  $$
    P^n
    \left(
      P(\Gamma(z_1,\ldots,z_m,Z_{m+1},\ldots,Z_l))\ge1-\epsilon
    \right)
    \ge
    1-\delta.
  $$
\end{proposition}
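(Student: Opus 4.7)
My plan is to prove the conditional (``Moreover'') bound first and then obtain $(E,\delta)$-validity by integrating over the proper training set. Fix $(z_1,\ldots,z_m)\in\mathbf Z^m$ and let $F$ denote the distribution function of $A((z_1,\ldots,z_m),Z)$ for $Z\sim P$. Under $P^n$, the calibration scores $\alpha_{m+1},\ldots,\alpha_l$ are i.i.d.\ from $F$, and the score of a fresh test example is an independent draw from $F$. Unpacking (\ref{eq:ICP})--(\ref{eq:p}), the test example is missed iff $|\{i : \alpha_i \le \alpha_{l+1}\}| \le k^* - 1$, where $k^* := \lfloor \epsilon(n+1) \rfloor$, equivalently iff $\alpha_{l+1} < \alpha_{(k^*)}$, the $k^*$-th order statistic of the calibration scores. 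Integrating out the test score, the conditional error probability given the full training set equals $F(\alpha_{(k^*)}-)$, so it suffices to bound $P^n(F(\alpha_{(k^*)}-)>E)$.

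Set $x^*:=\inf\{t:F(t)>E\}$. By monotonicity and right-continuity of $F$ one has $F(x^*)\ge E$, and the event $F(\alpha_{(k^*)}-)>E$ is equivalent to $\alpha_{(k^*)}>x^*$, i.e.\ to $N:=|\{i:\alpha_i\le x^*\}|<k^*$. Since $N\sim\mathrm{Bin}(n,p)$ with $p:=F(x^*)\ge E$, Hoeffding's inequality yields
\[
  P^n(N\le k^*-1)\le\exp\!\left(-\frac{2(np-k^*+1)^2}{n}\right).
\]
Using (\ref{eq:E}) to get $np\ge nE\ge n\epsilon+\sqrt{-n\ln\delta/2}$ together with $k^*\le\epsilon(n+1)$, one obtains $np-k^*+1\ge\sqrt{-n\ln\delta/2}+(1-\epsilon)\ge\sqrt{-n\ln\delta/2}$, so the right-hand side is at most $\delta$.

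The $(E,\delta)$-validity statement then follows at once by integrating the conditional bound over the proper training set via Fubini. The only nuisance is the combinatorial accounting around $k^*$ (the ``$+1$'' in the numerator of (\ref{eq:p}) and the strict inequality ``$>\epsilon$'' in (\ref{eq:ICP})), together with the possible atoms of $F$; both are handled uniformly by passing to the left limit $F(\alpha_{(k^*)}-)$ and invoking right-continuity of $F$, so no continuity hypothesis on $P$ is required.
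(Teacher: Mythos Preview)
Your proof is correct and follows essentially the same route as the paper's: reduce the error event to $\alpha_{l+1}<\alpha_{(k^*)}$, express the training-conditional miscoverage as $F(\alpha_{(k^*)}-)$, introduce the quantile threshold $x^*$ (which coincides with the paper's $\alpha^*$), convert $\{F(\alpha_{(k^*)}-)>E\}$ into a binomial lower-tail event, and finish with Hoeffding. The only cosmetic difference is that the paper first invokes monotonicity of the binomial CDF in $p$ (its Lemma~\ref{lem:binomial}) to pass from $\mathrm{Bin}(n,F(x^*))$ to $\mathrm{Bin}(n,E)$ before applying Hoeffding, whereas you apply Hoeffding directly to $\mathrm{Bin}(n,p)$ and use $p\ge E$ inside the exponent; your bookkeeping with $np-k^*+1\ge\sqrt{-n\ln\delta/2}+(1-\epsilon)$ is slightly sharper but leads to the same conclusion.
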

\ifCONF\addtocounter{theorem}{-1}\fi
\ifnotCONF\addtocounter{proposition}{-1}\fi

This proposition gives the following recipe for constructing $(\epsilon,\delta)$-valid set predictors.
The recipe only works if the training set is sufficiently large;
in particular, its size $l$ should significantly exceed
$
  N := (-\ln\delta)/(2\epsilon^2).
$
Choose an ICP $\Gamma$ with the size $n$ of the calibration set
exceeding $N$.
Then the set predictor
$
  \Gamma^{\epsilon-\sqrt{(-\ln\delta)/(2n)}}
$
will be $(\epsilon,\delta)$-valid.

\ifCONF
  \medskip\par\noindent\textbf{Proof of Proposition~\ref{prop:2-parameter}}\
\fi
\ifnotCONF
  \begin{proof}[Proof of Proposition~\ref{prop:2-parameter}]
\fi
  Let $E\in(\epsilon,1)$
  (not necessarily satisfying (\ref{eq:E})).
  Fix the proper training set $(z_1,\ldots,z_m)$.
  By (\ref{eq:ICP}) and (\ref{eq:p}),
  the set predictor $\Gamma^{\epsilon}$ makes an error,
  $z_{l+1}\notin\Gamma^{\epsilon}(z_1,\ldots,z_l)$,
  if and only if the number of $i=m+1,\ldots,l$ such that $\alpha_i\le\alpha^y$
  is at most $\lfloor\epsilon(n+1)-1\rfloor$;
  in other words, if and only if $\alpha^y<\alpha_{(k)}$,
  where $\alpha_{(k)}$ is the $k$th smallest $\alpha_i$
  and $k:=\lfloor\epsilon(n+1)-1\rfloor+1$.
  Therefore, the $P$-probability of the complement of $\Gamma^{\epsilon}(z_1,\ldots,z_l)$ is
  $P(A((z_1,\ldots,z_m),Z)<\alpha_{(k)})$,
  where $A$ is the inductive conformity $m$-measure.
  Set
  $$
    \alpha^*
    :=
    \inf\{\alpha\mid P(A((z_1,\ldots,z_m),Z)<\alpha) > E\}
    \text{ and }
    \begin{cases}
      E' := P(A((z_1,\ldots,z_m),Z)<\alpha^*)\\
      E'' := P(A((z_1,\ldots,z_m),Z)\le\alpha^*).
    \end{cases}
  $$
  The $\sigma$-additivity of measures implies that $E'\le E\le E''$,
  and $E'=E=E''$ unless $\alpha^*$ is an atom of $A((z_1,\ldots,z_m),Z)$.
  Both when $E'=E$ and when $E'<E$,
  the probability of error will exceed $E$ if an only if $\alpha_{(k)}>\alpha^*$.
  In other words,
  if only if we have at most $k-1$ of the $\alpha_i$ below or equal to $\alpha^*$.
  The probability that at most $k-1=\lfloor\epsilon(n+1)-1\rfloor$ values of the $\alpha_i$
  are below or equal to $\alpha^*$
  equals $\Prob(B''_n\le\lfloor\epsilon(n+1)-1\rfloor)\le\Prob(B_n\le\lfloor\epsilon(n+1)-1\rfloor)$,
  where $B''_n\sim\bin_{n,E''}$, $B_n\sim\bin_{n,E}$,
  and $\bin_{n,p}$ stands for the binomial distribution with $n$ trials and probability of success $p$.
  \ifnotCONF(For the inequality, see Lemma~\ref{lem:binomial} below.) \fi
  By Hoeffding's inequality
  (see, e.g., \citealt{vovk/etal:2005book}, p.~287),
  the probability of error will exceed $E$ with probability at most
  \ifCONF\begin{equation}
    \Prob(B_n\le\lfloor\epsilon(n+1)-1\rfloor)
    \le
    \Prob(B_n\le\epsilon n)
    \le
    e^{-2(E-\epsilon)^2n}.
  \end{equation}\fi
  \ifnotCONF\begin{multline}
    \Prob(B_n\le\lfloor\epsilon(n+1)-1\rfloor)
    \le
    \Prob(B_n\le\epsilon n)\\
    =
    \Prob(B_n/n-E\le\epsilon-E)
    \le
    \exp
    \left(
      -\frac{(\epsilon-E)^2n^2}{2n/4}
    \right)
    =
    e^{-2(E-\epsilon)^2n}.
  \end{multline}\fi
  \label{eq:chain}
  % where $B_n\sim\bin_{n,E}$
  Solving
  $
    e^{-2(E-\epsilon)^2n}
    =
    \delta
  $
  we obtain that $\Gamma^{\epsilon}$ is $(E,\delta)$-valid whenever (\ref{eq:E}) is satisfied.
\ifCONF
  \hfill\BlackBox\\[2mm]
\fi
\ifnotCONF
  \end{proof}
\fi

\ifnotCONF
  In the proof of Proposition~\ref{prop:2-parameter} we used the following lemma.
  \begin{lemma}\label{lem:binomial}
    Fix the number of trials $n$.
    The distribution function $\bin_{n,p}(K)$ of the binomial distribution
    is decreasing in the probability of success $p$ for a fixed $K\in\{0,\ldots,n\}$.
  \end{lemma}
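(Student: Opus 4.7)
The plan is to establish the monotonicity via a simple coupling argument. I would introduce, on a common probability space, $n$ independent random variables $U_1, \ldots, U_n$ uniformly distributed on $[0,1]$, and for each $p \in [0,1]$ define
\[
  S_n(p) := \sum_{i=1}^{n} \mathbf{1}\{U_i \le p\}.
\]
Each $S_n(p)$ is the sum of $n$ i.i.d.\ Bernoullis with success probability $p$, hence $S_n(p) \sim \bin_{n,p}$. Moreover the map $p \mapsto S_n(p)$ is pointwise nondecreasing in $p$: whenever $p \le p'$, each indicator satisfies $\mathbf{1}\{U_i \le p\} \le \mathbf{1}\{U_i \le p'\}$, so $S_n(p) \le S_n(p')$ surely.

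From this pointwise inequality the conclusion is immediate. For any fixed $K \in \{0,\ldots,n\}$ and any $p \le p'$, the event $\{S_n(p') \le K\}$ is contained in $\{S_n(p) \le K\}$, and therefore
\[
  \bin_{n,p'}(K)
  = \Prob(S_n(p') \le K)
  \le \Prob(S_n(p) \le K)
  = \bin_{n,p}(K),
\]
which is the claimed monotonicity. (One should read ``decreasing'' here as ``nonincreasing'', since $\bin_{n,p}(n) = 1$ for every $p$; strict monotonicity does hold on $p \in (0,1)$ for $K < n$.)

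There is really no obstacle; the only thing to watch is to make the coupling explicit so that the comparison of distribution functions is justified by the containment of events rather than by a derivative computation. One could equally well differentiate $\bin_{n,p}(K) = \sum_{k=0}^{K} \binom{n}{k} p^k (1-p)^{n-k}$ in $p$ and telescope to the single surviving term $-n\binom{n-1}{K} p^K (1-p)^{n-1-K} \le 0$, but the coupling route is both shorter and conceptually cleaner, so that is what I would present.
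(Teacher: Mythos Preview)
Your coupling argument is correct and constitutes a genuinely different route from the paper's proof. The paper differentiates $\bin_{n,p}(K)$ in $p$ to obtain
\[
  \frac{d\bin_{n,p}(K)}{dp}
  =
  \sum_{k=0}^K \frac{k-np}{p(1-p)}\binom{n}{k}p^k(1-p)^{n-k},
\]
and then argues that this sum is nonpositive because it has the same sign as the conditional mean of the increasing function $f(k)=k-np$ over the lower set $\{0,\ldots,K\}$, which cannot exceed the unconditional mean $0$. Your coupling via common uniforms bypasses calculus entirely and yields the stochastic ordering $\bin_{n,p}\preceq\bin_{n,p'}$ for $p\le p'$ directly, from which the monotonicity of the distribution function is immediate; this is both shorter and more transparent, and it generalizes at no extra cost to other families admitting the same monotone coupling. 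Your aside about telescoping the derivative to the single term $-n\binom{n-1}{K}p^K(1-p)^{n-1-K}$ is also correct and is in fact a tighter version of the paper's differentiation argument, giving an explicit nonpositive closed form rather than an indirect sign argument. The paper's approach, on the other hand, illustrates a reusable trick (comparing a conditional mean on a lower set to the global mean for an increasing function) that can be handy when no clean coupling or telescoping is available.
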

  \begin{proof}
    It suffices to check that
    $$
      \frac{d\bin_{n,p}(K)}{dp}
      =
      \frac{d}{dp}
      \sum_{k=0}^K\binom{n}{k}p^k(1-p)^{n-k}
      =
      \sum_{k=0}^K
      \frac{k-np}{p(1-p)}
      \binom{n}{k}p^{k}(1-p)^{n-k}
    $$
    is nonpositive for $p\in(0,1)$.
    The last sum has the same sign as the mean of the function $f(k):=k-np$
    over the set $k\in\{0,\ldots,K\}$ with respect to the binomial distribution,
    and so it remains to notice that the overall mean of $f$ is $0$ and that the function $f$ is increasing.
  \end{proof}
\fi

The inequality (\ref{eq:E}) in Proposition~\ref{prop:2-parameter} is simple
but somewhat crude as its derivation uses Hoeffding's inequality.
The following proposition is the more precise version of Proposition~\ref{prop:2-parameter}
that stops short of that last step.

\ifCONF\renewcommand{\thetheorem}{\arabic{theorem}b}\fi
\ifnotCONF\renewcommand{\theproposition}{\arabic{proposition}b}\fi
\begin{proposition}\label{prop:2-parameter-exact}
  Let $\epsilon,\delta,E\in[0,1]$.
  If $\Gamma$ is an inductive conformal predictor,
  the set predictor $\Gamma^{\epsilon}$ is $(E,\delta)$-valid
  provided
  \begin{equation}\label{eq:E-exact}
    \delta
    \ge
    \bin_{n,E}
    \left(
      \lfloor\epsilon(n+1)-1\rfloor
    \right),
  \end{equation}
  where $n:=l-m$ is the size of the calibration set
  and $\bin_{n,E}$ is the cumulative binomial distribution function
  with $n$ trials and probability of success $E$.
  If the random variable $A((z_1,\ldots,z_m),Z)$ is continuous,
  $\Gamma^{\epsilon}$ is $(E,\delta)$-valid if and only if (\ref{eq:E-exact}) holds.
\end{proposition}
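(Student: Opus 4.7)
The plan is to re-run the argument given for Proposition~\ref{prop:2-parameter} but to stop one step earlier, before Hoeffding's inequality is applied, so that the exact binomial tail appears rather than its exponential upper bound. Fix a proper training set $(z_1,\ldots,z_m)$ and condition on it throughout. As shown in that proof, $\Gamma^{\epsilon}$ makes an error on the test example precisely when $\alpha^y < \alpha_{(k)}$, with $k := \lfloor\epsilon(n+1)-1\rfloor + 1$, so the conditional $P$-probability of error equals $P(A((z_1,\ldots,z_m),Z) < \alpha_{(k)})$.

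Introducing $\alpha^* := \inf\{\alpha \mid P(A((z_1,\ldots,z_m),Z) < \alpha) > E\}$ together with $E' := P(A((z_1,\ldots,z_m),Z) < \alpha^*)$ and $E'' := P(A((z_1,\ldots,z_m),Z) \le \alpha^*)$, I have $E' \le E \le E''$, and the event ``the conditional error probability exceeds $E$'' coincides with $\{\alpha_{(k)} > \alpha^*\}$, equivalently, with the event that at most $k-1$ of the calibration scores lie in $(-\infty,\alpha^*]$. Since these scores are i.i.d.\ with $\Prob(\alpha_i \le \alpha^*) = E''$, this event has probability $\bin_{n,E''}(k-1)$, which by Lemma~\ref{lem:binomial} and $E'' \ge E$ is bounded above by
$$
  \bin_{n,E}(k-1) = \bin_{n,E}(\lfloor\epsilon(n+1)-1\rfloor).
$$
Hence (\ref{eq:E-exact}) implies $(E,\delta)$-validity once we integrate out $(z_1,\ldots,z_m)$, the bound being uniform in the proper training set.

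For the ``only if'' direction in the continuous case, observe that when $A((z_1,\ldots,z_m),Z)$ has no atoms, $\alpha^*$ is not an atom either and so $E' = E = E''$. The binomial comparison above becomes an equality, so the conditional probability that the error probability exceeds $E$ equals exactly $\bin_{n,E}(\lfloor\epsilon(n+1)-1\rfloor)$ for every proper training set, and hence also unconditionally. Therefore $\Gamma^{\epsilon}$ is $(E,\delta)$-valid if and only if (\ref{eq:E-exact}) holds. I anticipate no real obstacle: the argument is essentially Proposition~\ref{prop:2-parameter} with its last step dropped, together with the short observation that continuity of $A((z_1,\ldots,z_m),Z)$ forces $E''=E$, which turns the monotonicity bound into an equality.
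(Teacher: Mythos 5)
Your proof is correct and follows essentially the same route as the paper, whose own proof of Proposition~\ref{prop:2-parameter-exact} simply points back to the left-most expression in the chain of inequalities from the proof of Proposition~\ref{prop:2-parameter} and notes that $E''=E$ in the continuous case. Your write-up just makes explicit the steps (the reduction to $\{\alpha_{(k)}>\alpha^*\}$, the exact binomial probability $\bin_{n,E''}(k-1)$, and the monotonicity from Lemma~\ref{lem:binomial}) that the paper leaves as a cross-reference.
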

\ifCONF\renewcommand{\thetheorem}{\arabic{theorem}}\fi
\ifnotCONF\renewcommand{\theproposition}{\arabic{proposition}}\fi

\begin{proof}
  See the left-most expression in (\ref{eq:chain}) and remember that $E''=E$
  unless $\alpha^*$ is an atom of $A((z_1,\ldots,z_m),Z)$.
% \ifCONF
%   \hfill\BlackBox\\[2mm]
% \fi
% \ifnotCONF
%   \end{proof}
% \fi
\end{proof}

\ifnotCONF
  \begin{remark}
    {\rm The training conditional guarantees discussed in this section
    are very similar to those for the hold-out estimate:
    compare, e.g., Proposition~\ref{prop:2-parameter-exact} above and Theorem~3.3 in \citet{langford:2005}.
    The former says that $\Gamma^{\epsilon}$ is $(E,\delta)$-valid for
    \begin{equation}\label{eq:my-inequality}
      E
      :=
      \overbin_{n,\delta}
      \left(
        \lfloor\epsilon(n+1)-1\rfloor
      \right)
      \le
      \overbin_{n,\delta}
      \left(
        \epsilon n
      \right)
    \end{equation}
    where $\overbin$ is the inverse function to $\bin$:
    \begin{equation*}
      \overbin_{n,\delta}(k)
      :=
      \max\{p\mid \bin_{n,p}(k)\ge\delta\}
    \end{equation*}
    (unless $k=n$, we can also say that $\overbin_{n,\delta}(k)$ is the only value of $p$
    such that $\bin_{n,p}(k)=\delta$:
    cf.\ Lemma~\ref{lem:binomial} above).
    And the latter says that a point predictor's error probability (over the test example) does not exceed
    \begin{equation}\label{eq:hold-out}
      \overbin_{n,\delta}
      \left(
        k
      \right)
    \end{equation}
    with probability at least $1-\delta$ (over the training set),
    where $k$ is the number of errors on a held-out set of size $n$.
    The main difference between (\ref{eq:my-inequality}) and (\ref{eq:hold-out})
    is that whereas one inequality contains the approximate expected number of errors $\epsilon n$
    for $n$ new examples
    the other contains the actual number of errors $k$ on $n$ examples.
    Several researchers have found that the hold-out estimate is surprisingly difficult to beat;
    however, like the ICP of this section, it is not example conditional at all.}
  \end{remark}

  \begin{remark}
    {\rm Inequality (\ref{eq:E-exact}) can be rewritten as
    \begin{equation*}
      E
      \ge
      \overbin_{n,\delta}
      \left(
        \lfloor\epsilon(n+1)-1\rfloor
      \right).
    \end{equation*}
    In combination with inequality 2.\ in \citet{langford:2005}, p.~278,
    this shows that Proposition~\ref{prop:2-parameter} will continue to hold
    if (\ref{eq:E}) is replaced by
    \begin{equation*}
      E
      \ge
      \epsilon
      +
      \sqrt{\frac{-2\epsilon\ln\delta}{n}}
      -
      \frac{2\ln\delta}{n}.
    \end{equation*}
    The last inequality is weaker than (\ref{eq:E}) for small $\epsilon$.}
  \end{remark}
\fi

\section{Conditional inductive conformal predictors}
\label{sec:CICP}

The motivation behind conditional inductive conformal predictors
is that ICPs do not always achieve the required probability $\epsilon$
of error $Y_{l+1}\notin\Gamma^{\epsilon}(Z_1,\ldots,Z_l,X_{l+1})$
conditional on $(X_{l+1},Y_{l+1})\in E$ for important sets $E\subseteq\mathbf{Z}$.
\ifFULL\bluebegin
  (This corresponds to the vertices O, L, and OL of the conditionality cube
  in Figure~\ref{fig:cube}.)
\blueend\fi
This is often undesirable.
If, e.g., our set predictor is valid at the significance level $5\%$
but makes an error with probability $10\%$ for men and $0\%$ for women,
both men and women can be unhappy with calling $5\%$ the probability of error.
Moreover, in many problems we might want different significance levels
for different regions of the example space:
e.g., in the problem of spam detection (considered in Section~\ref{sec:experiments})
classifying spam as email usually makes much less harm than classifying email as spam.

An \emph{inductive $m$-taxonomy} is a measurable function
$K:\mathbf{Z}^m\times\mathbf{Z}\to\mathbf{K}$,
where $\mathbf{K}$ is a measurable space.
Usually the \emph{category} $K((z_1,\ldots,z_m),z)$ of an example $z$
is a kind of classification of $z$,
which may depend on the proper training set $(z_1,\ldots,z_m)$.

The \emph{conditional inductive conformal predictor} (conditional ICP)
corresponding to $K$ and an inductive conformity $m$-measure $A$
is defined as the set predictor (\ref{eq:ICP}),
where the p-values $p^y$ are now defined by
\begin{equation}\label{eq:p-cond}
  p^y
  :=
  \frac
  {
    \left|\left\{
      i=m+1,\ldots,l \mid \kappa_i=\kappa^y \;\&\; \alpha_i\le\alpha^y
    \right\}\right|
    +
    1
  }
  {
    \left|\left\{
      i=m+1,\ldots,l \mid \kappa_i=\kappa^y
    \right\}\right|
    +
    1
  },
\end{equation}
the categories $\kappa$ are defined by
\begin{equation*}%\label{eq:kappa}
  \kappa_i
  :=
  K((z_1,\ldots,z_m),z_i),
  \quad
  i=m+1,\ldots,l,
  \qquad
  \kappa^y
  :=
  K((z_1,\ldots,z_m),(x,y)),
\end{equation*}
and the conformity scores $\alpha$ are defined as before by (\ref{eq:alphas}).
A \emph{label conditional ICP} is a  conditional ICP with the inductive $m$-taxonomy $K(\cdot,(x,y)):=y$.

The following proposition is the conditional analogue of Proposition~\ref{prop:validity-ICP};
in particular, it shows that in classification problems label conditional ICPs
achieve label conditional validity.

\begin{proposition}\label{prop:validity-ICP-cond}
  If random examples $Z_{m+1},\ldots,Z_l,Z_{l+1}=(X_{l+1},Y_{l+1})$ are exchangeable,
  the probability of error $Y_{l+1}\notin\Gamma^{\epsilon}(Z_1,\ldots,Z_l,X_{l+1})$
  given the category $K((Z_1,\ldots,Z_m),Z_{l+1})$ of $Z_{l+1}$
  does not exceed $\epsilon$ for any $\epsilon$ and any conditional inductive conformal predictor $\Gamma$
  corresponding to $K$.
\end{proposition}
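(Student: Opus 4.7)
The plan is to apply the argument for Proposition~\ref{prop:validity-ICP} one category at a time. I would condition on the realised proper training set $(z_1,\ldots,z_m)$ throughout; this makes $A((z_1,\ldots,z_m),\cdot)$ and $K((z_1,\ldots,z_m),\cdot)$ deterministic measurable maps and turns the $\alpha_i$ and $\kappa_i$ from (\ref{eq:alphas}), defined for $i=m+1,\ldots,l+1$, into measurable functions of the exchangeable tuple $(Z_{m+1},\ldots,Z_{l+1})$. It is enough to establish the desired bound conditional on the proper training set and on an arbitrary realised value $k$ of $\kappa_{l+1}$, uniformly in both, and then integrate back out.

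The core step is to use exchangeability to argue that, conditional on the multiset $\{\!\{Z_{m+1},\ldots,Z_{l+1}\}\!\}$ together with $\{\kappa_{l+1}=k\}$, the test example $Z_{l+1}$ is uniformly distributed over the sub-multiset $\mathcal{M}_k:=\{\!\{Z_i:\kappa_i=k\}\!\}$, whose cardinality I write $N\ge 1$. Reading (\ref{eq:p-cond}) with $y=Y_{l+1}$, and noting that the ``$+1$'' in the numerator and denominator accounts for $i=l+1$ itself, the error event $Y_{l+1}\notin\Gamma^{\epsilon}$ coincides with $\{R/N\le\epsilon\}$, where
$$
  R:=\bigl|\{i\in\{m+1,\ldots,l+1\}:\kappa_i=k,\ \alpha_i\le\alpha_{l+1}\}\bigr|.
$$
Conditional on $\mathcal{M}_k$, I group the $N$ scores into blocks of equal value, of sizes $m_1,m_2,\ldots$ in increasing order. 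Uniformity of $Z_{l+1}$ within $\mathcal{M}_k$ gives $\Pr(R=\sum_{i\le j}m_i\mid\mathcal{M}_k)=m_j/N$; summing over those $j$ with $\sum_{i\le j}m_i\le\epsilon N$ yields $\Pr(R\le\epsilon N\mid\mathcal{M}_k)\le\epsilon$.

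Averaging over $\mathcal{M}_k$ given $\kappa_{l+1}=k$, and then over the proper training set, preserves this bound, which is exactly the conditional statement of the proposition. The main obstacle is simply the bookkeeping around ties among the conformity scores, handled by the block decomposition just described: in the continuous case one recovers the cleaner identity $\Pr(R\le\epsilon N\mid\mathcal{M}_k)=\lfloor\epsilon N\rfloor/N\le\epsilon$, and otherwise the fact that $R$ counts all $\alpha_i\le\alpha_{l+1}$ assigns $Z_{l+1}$ to the top of its tied block and therefore can only inflate $R$, which can only decrease the error probability.
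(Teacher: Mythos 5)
Your proof is correct. The paper itself gives no proof of this proposition: it is stated as the ``conditional analogue'' of Proposition~\ref{prop:validity-ICP}, which is in turn cited from \citet{vovk/etal:2005book} and called almost obvious, so there is nothing to compare against except the standard argument --- and yours is exactly that argument, written out in full: condition on the proper training set and on the multiset of calibration-plus-test examples, observe that exchangeability makes $Z_{l+1}$ uniform over the sub-multiset of its category, and handle ties by noting that $p^{Y_{l+1}}=R/N$ where $R$ is the (tie-inclusive) bottom rank of $\alpha_{l+1}$ within that sub-multiset, so that $\Pr(R\le\epsilon N)\le\epsilon$. The only caveat worth flagging is that conditioning on the realised proper training set implicitly reads the hypothesis as exchangeability of $Z_{m+1},\ldots,Z_{l+1}$ \emph{given} $(Z_1,\ldots,Z_m)$ (or independence of the two blocks); this is the intended reading, used equally implicitly in Proposition~\ref{prop:validity-ICP}, so it is not a defect of your argument.
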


\ifFULL\bluebegin
  The condition of exchangeability in this proposition can (and should) be relaxed.
\blueend\fi

\section{Object conditional validity}
\label{sec:O}

% A \emph{rigged probability distribution} is a probability distribution $P$ on $\mathbf{Z}$
% with a fixed version $P_x=P(\cdot\mid x)$ of the conditional probability distribution of $Y$ given $X=x$
% for $(X,Y)\sim P$.
% In other words, we fix a Markov kernel $\{P_x\mid x\in\mathbf{X}\}$ such that,
% for all bounded measurable functions $f:\mathbb{Z}\to\mathbb{R}$,
% $\int_{\mathbf{Z}}f(x,y)P(dx,dy)=\int_{\mathbf{X}}\int_{\mathbf{Y}}f(x,y)P_x(dy)P_{\mathbf{X}}(dx)$,
% where $P_{\mathbf{X}}$ is the marginal distribution
% $P_{\mathbf{X}}(E):=P(E\times\mathbf{Y})$.
% Using rigged probability distributions will simplify presentation
% allowing us to avoid technicalities
% caused by the non-uniqueness of conditional probabilities.
% We will the same notation for a rigged probability distribution
% and the corresponding probability distribution on $\mathbf{Z}$.

\ifFULL\bluebegin
  Perhaps a similar argument is applicable to test-label conditional validity
  in the case of regression, $\mathbf{Y}=\mathbb{R}$.
\blueend\fi

In this section we prove a negative result
(a version of Lemma~1 in \citealt{lei/wasserman:2012})
which says that the requirement of precise object conditional validity cannot be satisfied
in a non-trivial way for rich object spaces (such as $\mathbb{R}$).
If $P$ is a probability distribution on $\mathbf{Z}$,
we let $P_{\mathbf{X}}$ stand for its marginal distribution on $\mathbf{X}$:
$P_{\mathbf{X}}(A):=P(A\times\mathbf{Y})$.
Let us say that a set predictor $\Gamma$ \emph{has $1-\epsilon$ object conditional validity},
where $\epsilon\in(0,1)$, if,
for all probability distributions $P$ on $\mathbf{Z}$ and $P_{\mathbf{X}}$-almost all $x\in\mathbf{X}$,
\begin{equation}\label{eq:conditional-validity}
  P^{l+1}
  \left(
    Y_{l+1}\in\Gamma(Z_1,\ldots,Z_l,X_{l+1})
    \mid X_{l+1}=x
  \right)
  \ge
  1-\epsilon.
\end{equation}
The Lebesgue measure on $\mathbb{R}$ will be denoted $\Lambda$.
If $Q$ is a probability distribution,
we say that a property $F$ holds for \emph{$Q$-almost all} elements of a set $E$
if $Q(E\setminus F)=0$;
a \emph{$Q$-non-atom} is an element $x$ such that $Q(\{x\})=0$.

\begin{proposition}\label{prop:negative}
  Suppose $\mathbf{X}$ is a separable metric space equipped with the Borel $\sigma$-algebra.
  Let $\epsilon\in(0,1)$.
  Suppose that a set predictor $\Gamma$ has $1-\epsilon$ object conditional validity.
  In the case of regression, we have,
  for all $P$ and for $P_{\mathbf{X}}$-almost all $P_{\mathbf{X}}$-non-atoms $x\in\mathbf{X}$,
  \begin{equation}\label{eq:negative-regression}
    P^l
    \left(
      % \lim_{\delta\to0}
      % \essinf_{\left\|x-x_0\right\|\le\delta}
      \Lambda(\Gamma(Z_1,\ldots,Z_l,x))
      =
      \infty
    \right)
    \ge
    1-\epsilon.
  \end{equation}
  In the case of classification, we have,
  for all $P$, all $y\in\mathbf{Y}$, and $P_{\mathbf{X}}$-almost all $P_{\mathbf{X}}$-non-atoms~$x$,
  \begin{equation}\label{eq:negative-classification}
    P^l
    \left(
      % \lim_{\delta\to0}
      % \essinf_{\left\|x-x_0\right\|\le\delta}
      y\in\Gamma(Z_1,\ldots,Z_l,x)
    \right)
    \ge
    1-\epsilon.
  \end{equation}
\end{proposition}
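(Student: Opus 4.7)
The plan is to prove the classification claim (\ref{eq:negative-classification}) first via a perturbation argument that turns $x$ into an atom of the marginal, and then to deduce the regression claim (\ref{eq:negative-regression}) from it by integrating over the real line and applying Markov's inequality together with reverse Fatou.

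For (\ref{eq:negative-classification}), I fix $P$, $y\in\mathbf{Y}$, and any $P_{\mathbf{X}}$-non-atom $x$, and for each $\eta\in(0,1)$ introduce the mixture
\[
  Q_\eta := \eta\,\delta_{(x,y)} + (1-\eta)\,P .
\]
The singleton $\{x\}$ has mass $\eta>0$ under the marginal $(Q_\eta)_{\mathbf{X}}$, so it lies outside every $(Q_\eta)_{\mathbf{X}}$-null set and in particular inside whatever full-measure set the hypothesis (\ref{eq:conditional-validity}) supplies for $Q_\eta$. Because $P_{\mathbf{X}}(\{x\})=0$, under $Q_\eta$ we have $Y_{l+1}=y$ with probability one given $X_{l+1}=x$, so applying (\ref{eq:conditional-validity}) to $Q_\eta$ at the atom $x$ yields
\[
  Q_\eta^l\bigl(y \in \Gamma(Z_1,\ldots,Z_l,x)\bigr) \ge 1-\epsilon .
\]
I then couple $Q_\eta^l$ with $P^l$ by Bernoulli thinning (each $Z_i$ equals $(x,y)$ with probability $\eta$ and is otherwise independently drawn from $P$); on the event $B$, of probability $(1-\eta)^l$, that no $Z_i$ equals $(x,y)$, the sample has the same distribution as under $P^l$. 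This gives
\[
  P^l\bigl(y \in \Gamma(Z_1,\ldots,Z_l,x)\bigr) \ge 1-\epsilon - \bigl(1-(1-\eta)^l\bigr),
\]
and letting $\eta\downarrow 0$ proves (\ref{eq:negative-classification}) at the chosen $x$, in fact at \emph{every} non-atom, which is stronger than the stated almost-all conclusion.

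For (\ref{eq:negative-regression}), fix a non-atom $x$. By what was just proved, $P^l(y\notin\Gamma(Z_1,\ldots,Z_l,x))\le\epsilon$ for every $y\in\mathbb{R}$, so Fubini's theorem gives
\[
  \mathbb{E}_{P^l}\bigl[\Lambda([-n,n]\setminus\Gamma(Z_1,\ldots,Z_l,x))\bigr]
  = \int_{-n}^n P^l\bigl(y\notin\Gamma(Z_1,\ldots,Z_l,x)\bigr)\D y
  \le 2n\epsilon .
\]
Markov's inequality then yields, for every $\beta\in(\epsilon,1)$,
\[
  P^l\bigl(\Lambda(\Gamma(Z_1,\ldots,Z_l,x)\cap[-n,n]) > 2n(1-\beta)\bigr) \ge 1 - \epsilon/\beta .
\]
On the $\limsup$ of these events $\Lambda(\Gamma(Z_1,\ldots,Z_l,x)) > 2n(1-\beta)$ for infinitely many $n$, forcing $\Lambda(\Gamma(Z_1,\ldots,Z_l,x))=\infty$; reverse Fatou therefore gives $P^l(\Lambda(\Gamma(Z_1,\ldots,Z_l,x))=\infty)\ge 1-\epsilon/\beta$, and sending $\beta\uparrow 1$ produces (\ref{eq:negative-regression}).

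The only real delicacy is the classification step: object conditional validity (\ref{eq:conditional-validity}) is hypothesised only on a set of full $P_{\mathbf{X}}$-measure, so one cannot naively invoke it at a prescribed non-atom under $P$ itself, since the exceptional null set might contain exactly that point. The mixture $Q_\eta$ is engineered precisely to make $x$ an atom of the marginal, which puts $x$ into the full-measure set regardless of what the exceptional null set is. The remaining ingredients---the coupling, the Fubini/Markov calculation, and the reverse-Fatou limit---are routine given the standing separability of $\mathbf{X}$.
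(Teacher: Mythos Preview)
Your argument is correct and takes a genuinely different route from the paper's. The paper argues each case by contradiction: it posits a set $E$ of non-atoms of positive $P_{\mathbf{X}}$-measure on which the conclusion fails, shrinks $E$ (using separability via Lemma~\ref{lem:shrink}) until $P_{\mathbf{X}}(E)$ is small enough that replacing the conditional law of $Y$ on $E$ leaves $Q^l$ within total variation $\delta/2$ of $P^l$, and then derives a contradiction with object conditional validity under $Q$. Regression and classification are handled in parallel by separate choices of the modified conditional law.

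Your approach is more direct and in two respects stronger. First, by mixing in the point mass $\delta_{(x,y)}$ you force $x$ to be an atom of the perturbed marginal, so (\ref{eq:conditional-validity}) applies at $x$ automatically; the coupling bound $\mathrm{TV}(Q_\eta^l,P^l)\le 1-(1-\eta)^l$ then transfers the conclusion back to $P^l$ and $\eta\downarrow 0$ finishes. This yields (\ref{eq:negative-classification}) at \emph{every} non-atom, not merely $P_{\mathbf{X}}$-almost every one, and it never invokes separability of $\mathbf{X}$ or the shrinking Lemma~\ref{lem:shrink}. Second, you obtain the regression conclusion from the pointwise coverage bound by a clean Fubini/Markov/reverse-Fatou argument rather than by a second bespoke perturbation; this unifies the two cases. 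The paper's approach, by contrast, keeps the two cases logically independent and pays for that with the auxiliary topological lemma; its advantage is only that it stays closer to Lei and Wasserman's original proof. One small remark: your final sentence credits separability for the ``remaining ingredients'', but in fact your proof does not use it anywhere---only that singletons are Borel and that $\Gamma$ is jointly measurable, both of which are already assumed.
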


We are mainly interested in the case of a small $\epsilon$ (corresponding to high confidence),
and in this case (\ref{eq:negative-regression}) implies that, in the case of regression, prediction intervals
(i.e., the convex hulls of prediction sets)
can be expected to be infinitely long
unless the new object is an atom.
In the case of classification,
(\ref{eq:negative-classification}) says that each particular $y\in\mathbf{Y}$
is likely to be included in the prediction set,
and so the prediction set is likely to be large.
In particular, 
(\ref{eq:negative-classification}) implies that the expected size of the prediction set
is a least $(1-\epsilon)\left|\mathbf{Y}\right|$.

Of course, the condition that $x$ be a non-atom is essential:
if $P_{\mathbf{X}}(\{x\})>0$,
an inductive conformal predictor that ignores all examples with objects different from $x$
will have $1-\epsilon$ object conditional validity
and can give narrow predictions if the training set is big enough
to contain many examples with $x$ as their object.

\ifnotCONF
\begin{remark}
  {\rm Nontrivial set predictors having $1-\epsilon$ object conditional validity
  are constructed by \citet{mccullagh/etal:2009-full}
  assuming the Gauss linear model.}
\end{remark}
\fi

\ifCONF
  \medskip\par\noindent\textbf{Proof of Proposition~\ref{prop:negative}}\
\fi
\ifnotCONF
  \begin{proof}[Proof of Proposition~\ref{prop:negative}]
\fi
  The proof will be based on the ideas of Lei and Wasserman
  (\citeyear{lei/wasserman:2012}, the proof of Lemma~1).

  Suppose (\ref{eq:negative-regression}) does not hold on a measurable set $E$
  of $P_{\mathbf{X}}$-non-atoms $x\in\mathbf{X}$ such that $P_{\mathbf{X}}(E)>0$.
  Shrink $E$ in such a way that $P_{\mathbf{X}}(E)>0$ still holds
  but there exists $\delta>0$ and $C>0$ such that,
  for each $x\in E$,
  \begin{equation}\label{eq:shrink}
    P^l
    \left(
      \Lambda(\Gamma(Z_1,\ldots,Z_l,x))
      \le
      C
    \right)
    \ge
    \epsilon+\delta.
  \end{equation}
  Let $V$ be the total variation distance between probability measures,
  $V(P,Q):=\sup_A\left|P(A)-Q(A)\right|$;
  we then have
  $$
    V(P^l,Q^l)
    \le
    \sqrt{2}\sqrt{1-(1-V(P,Q))^l}
  $$
  (this follows from the connection of $V$ with the Hellinger distance:
  see, e.g., \citealt{tsybakov:2010}, Section 2.4).
  Shrink $E$ further so that $P_{\mathbf{X}}(E)>0$ still holds but
  \begin{equation}\label{eq:small-total-variation}
    \sqrt{2}\sqrt{1-(1-P_{\mathbf{X}}(E))^l}
    \le
    \delta/2.
  \end{equation}
  (This can be done under our assumption that $\mathbf{X}$ is a separable metric space:
  \ifCONF
    we can take the intersection of $E$ and some neighbourhood of any element of $\mathbf{X}$
    for which all such intersections have a positive $P_{\mathbf{X}}$-probability\fi
  \ifnotCONF
    see Lemma~\ref{lem:shrink} below\fi.)
  Define another probability distribution $Q$ on $\mathbf{Z}$ by the requirements that
  $Q(A\times B)=P(A\times B)$ for all measurable $A\subseteq(\mathbf{X}\setminus E)$, $B\subseteq\mathbb{R}$
  and $Q(A\times B)=P_{\mathbf{X}}(A)\times U(B)$ for all measurable $A\subseteq E$, $B\subseteq\mathbb{R}$,
  where $U$ is the uniform probability distribution on the interval $[-DC,DC]$
  and $D>0$ will be chosen below.
  Since $V(P,Q)\le P_{\mathbf{X}}(E)$,
  we have $V(P^l,Q^l)\le\delta/2$;
  therefore, by (\ref{eq:shrink}),
  \begin{equation*}
    Q^l
    \left(
      \Lambda(\Gamma(Z_1,\ldots,Z_l,x))
      \le
      C
    \right)
    \ge
    \epsilon+\delta/2
  \end{equation*}
  for each $x\in E$.
  The last inequality implies,
  by Fubini's theorem,
  \begin{equation*}
    Q^{l+1}
    \left(
      \Lambda(\Gamma(Z_1,\ldots,Z_l,X_{l+1}))
      \le
      C
      \;\&\;
      X_{l+1}\in E
    \right)
    \ge
    \left(
      \epsilon+\delta/2
    \right)
    Q_{\mathbf{X}}(E),
  \end{equation*}
  where $Q_{\mathbf{X}}(E)=P_{\mathbf{X}}(E)>0$ is the marginal $Q$-probability of $E$.
  When $D=D(\delta Q_{\mathbf{X}}(E),C)$ is sufficiently large this in turn implies
  \begin{equation*}
    Q^{l+1}
    \left(
      Y_{l+1}\notin\Gamma(Z_1,\ldots,Z_l,X_{l+1})
      \;\&\;
      X_{l+1}\in E
    \right)
    \ge
    \left(
      \epsilon+\delta/4
    \right)
    Q_{\mathbf{X}}(E).
  \end{equation*}
  However, the last inequality contradicts
  \begin{equation*}
    \frac
    {
      Q^{l+1}
      \left(
        Y_{l+1}\notin\Gamma(Z_1,\ldots,Z_l,X_{l+1})
        \;\&\;
        X_{l+1}\in E
      \right)
    }
    {Q_{\mathbf{X}}(E)}
    \le
    \epsilon,
  \end{equation*}
  which follows from $\Gamma$ having $1-\epsilon$ object conditional validity
  and the definition of conditional probability.

  It remains to consider the case of classification.
  Suppose (\ref{eq:negative-classification}) does not hold on a measurable set $E$
  of $P_{\mathbf{X}}$-non-atoms $x\in\mathbf{X}$ such that $P_{\mathbf{X}}(E)>0$.
  Shrink $E$ in such a way that $P_{\mathbf{X}}(E)>0$ still holds
  but there exists $\delta>0$ such that,
  for each $x\in E$,
  \begin{equation*}
    P^l
    \left(
      y\in\Gamma(Z_1,\ldots,Z_l,x)
    \right)
    \le
    1-\epsilon-\delta.
  \end{equation*}
  Without loss of generality we further assume that (\ref{eq:small-total-variation}) also holds.
  Define a probability distribution $Q$ on $\mathbf{Z}$ by the requirements that
  $Q(A\times B)=P(A\times B)$ for all measurable $A\subseteq(\mathbf{X}\setminus E)$ and all $B\subseteq\mathbf{Y}$
  and that $Q(A\times\{y\})=P_{\mathbf{X}}(A)$ for all measurable $A\subseteq E$
  (i.e., modify $P$ setting the conditional distribution of $Y$ given $X\in E$ to the unit mass
  concentrated at $y$).
  Then for each $x\in E$ we have
  \begin{equation*}
    Q^l
    \left(
      y\in\Gamma(Z_1,\ldots,Z_l,x)
    \right)
    \le
    1-\epsilon-\delta/2,
  \end{equation*}
  which implies
  \begin{equation*}
    Q^{l+1}
    \left(
      Y_{l+1}\in\Gamma(Z_1,\ldots,Z_l,X_{l+1})
      \;\&\;
      X_{l+1}\in E
    \right)
    \le
    \left(
      1-\epsilon-\delta/2
    \right)
    Q_{\mathbf{X}}(E).
  \end{equation*}
  The last inequality contradicts $\Gamma$ having $1-\epsilon$ object conditional validity.
  \ifFULL\bluebegin

    Intuitively, all the talk in the proof about the total variation distance is irrelevant:
    we simply assume that $x$ is impossible under $P_{\mathbf{X}}$.
  \blueend\fi
\ifCONF
  \hfill\BlackBox\\[2mm]
\fi
\ifnotCONF
  \end{proof}
\fi

\ifnotCONF
  In the proof of Proposition \ref{prop:negative} we used the following lemma.
  \begin{lemma}\label{lem:shrink}
    If $Q$ is a probability measure on $\mathbf{X}$, which a separable metric space,
    $E$ is a set of $Q$-non-atoms such that $Q(E)>0$, and $\delta>0$ is an arbitrarily small number,
    then there is $E'\subseteq E$ such that $Q(E')<\delta$.
  \end{lemma}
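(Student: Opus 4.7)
The plan is to locate a point $y\in E$ every neighbourhood of which has positive $Q$-measure inside $E$, and then shrink those neighbourhoods down to $\{y\}$, using that $y$ is a non-atom to force the measure to tend to $0$ while staying positive. Passing through some value in $(0,\delta)$ then gives the desired $E'$.

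First, I would define the open set
\[
  U := \{x\in\mathbf{X} \mid \exists r>0,\, Q(B(x,r)\cap E)=0\},
\]
i.e.\ the points admitting a ``$Q$-null neighbourhood in $E$''. Since $\mathbf{X}$ is a separable metric space it is Lindel\"of, so the open cover of $U$ by the balls $B(x,r)$ witnessing membership in $U$ has a countable subcover. Each ball in this subcover has $Q$-measure zero when intersected with $E$, and hence $Q(U\cap E)=0$ by countable subadditivity. Because $Q(E)>0$, the complement $E\setminus U$ is non-empty; pick any $y\in E\setminus U$.

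By the very definition of $U$, $Q(B(y,1/n)\cap E)>0$ for every $n\ge 1$. The sets $E_n := B(y,1/n)\cap E$ form a decreasing sequence with $\bigcap_n E_n = \{y\}\cap E = \{y\}$. Continuity of the finite measure $Q$ from above gives $Q(E_n)\downarrow Q(\{y\})=0$, where the last equality uses that $y$ is a $Q$-non-atom. Since $Q(E_n)>0$ for every $n$ but $Q(E_n)\to 0$, for $n$ large enough we have $0<Q(E_n)<\delta$; take $E':=E_n$.

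The only subtle step is the Lindel\"of argument showing $Q(U\cap E)=0$: one must be careful that the witnessing radii $r$ may depend on the point, so the cover of $U$ is genuinely by open balls (not all of a fixed radius), and separability is used precisely to extract a countable subcover. Everything else is just monotone continuity of a finite Borel measure applied to a nested sequence of balls collapsing to a non-atom.
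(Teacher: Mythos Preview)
Your proof is correct and follows essentially the same strategy as the paper: both locate a point whose every neighbourhood meets $E$ in positive measure (the paper does this by contradiction using a countable dense set with rational radii, you do it directly via the Lindel\"of property, which amounts to the same thing) and then shrink balls around that point. Your write-up is in fact a bit more explicit than the paper's, since you spell out the continuity-from-above step giving $Q(E_n)\to Q(\{y\})=0$, which the paper leaves implicit.
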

  \begin{proof}
    We can take the intersection of $E$ and an open ball centered at any element of $\mathbf{X}$
    for which all such intersections have a positive $Q$-probability.
    Let us prove that such elements exist.
    Suppose they do not.

    Fix a countable dense subset $A_1$ of $\mathbf{X}$.
    Let $A_2$ be the union of all open balls $B$ with rational radii
    centered at points in $A_1$ such that $Q(B\cap E)=0$.
    On one hand, the $\sigma$-additivity of measures implies $Q(A_2\cap E)=0$.
    On the other hand, $A_2=\mathbf{X}$:
    indeed, for each $x\in\mathbf{X}$
    there is an open ball $B$ of some radius $\delta>0$ centered at $x$ that satisfies $Q(B\cap E)=0$;
    since $x$ belongs to the radius $\delta/2$ open ball
    centered at a point in $A_1$ at a distance of less than $\delta/2$ from $x$,
    we have $x\in A_2$.
    This contradicts $Q(E)>0$.
  \end{proof}
\fi

Proposition~\ref{prop:negative} can be extended to randomized set predictors $\Gamma$
(in which case $P^l$ and $P^{l+1}$ in expressions such as (\ref{eq:conditional-validity})
and (\ref{eq:negative-regression}) should be replaced by the probability distribution
comprising both $P$ and the internal coin tossing of $\Gamma$).
This clarifies the provenance of $\epsilon$
in (\ref{eq:negative-regression}) and (\ref{eq:negative-classification}):
$\epsilon$ cannot be replaced by a smaller constant
since the set predictor predicting $\mathbf{Y}$ with probability $1-\epsilon$
and $\emptyset$ with probability $\epsilon$ has $1-\epsilon$ object conditional validity.

Proposition~\ref{prop:negative} does not prevent the existence of efficient set predictors
that are conditionally valid in an asymptotic sense;
indeed, the paper by \citet{lei/wasserman:2012} is devoted
to constructing asymptotically efficient and asymptotically conditionally valid set predictors
in the case of regression.

\ifFULL\bluebegin
It appears that Proposition~\ref{prop:negative} should also be applicable
to precise label conditional validity in the case of regression.
How?

\section{A possible algorithm}

Steps of a possible reasonable algorithm for classification
(asymptotically conditional and efficient; giving reasonable results for finite sample):
\begin{itemize}
\item
  From the proper training set, construct a split of $\mathbf{Z}$ into a reasonable number of regions.
  (For example, the average size of a region being about $100$.)
  This can be done, e.g., by using a decision tree.
  (Another possibility would be to use the $k$-means clustering algorithm.)
  In classification problems with few labels, place the label at the root
  (which corresponds to a label conditional inductive conformal predictor);
  after that, work only on splitting $\mathbf{X}$.
  Inside each leaf, find a good prediction rule
  (asymptotically, the rule can just ignore the objects;
  this simplifies the argument for asymptotic efficiency
  but is an awful idea for data sets that are not huge).
\item
  Compute the nonconformity scores of the calibration examples.
\item
  Run the leaf-conditional inductive conformal predictor 
  to predict the labels of test objects.
\end{itemize}
[Explain when this algorithm is asymptotically conditionally valid and efficient.]
\blueend\fi

\section{Experiments}
\label{sec:experiments}

This section describes some simple experiments on the well-known \texttt{Spambase} data set
contributed by George Forman to the UCI Machine Learning Repository
\citep{UCI:data}.
% \url{http://archive.ics.uci.edu/ml/datasets/Spambase}
Its overall size is 4601 examples and it contains examples of two classes:
\texttt{email} (also written as 0) and \texttt{spam} (also written as 1).
\citet{hastie/etal:2009} report results of several machine-learning algorithms on this data set
split randomly into a training set of size 3065 and test set of size 1536.
The best result is achieved by MART (multiple additive regression tree;
$4.5\%$ error rate according to the second edition of \citealt{hastie/etal:2009}).

We randomly permute the data set and divide it into 2602 examples for the proper training set,
999 for the calibration set, and 1000 for the test set.
\ifnotCONF
  Our split between the proper training, calibration, and test sets, approximately 4:1:1,
  is inspired by the standard recommendation for the allocation of data
  into training, validation, and test sets
  (see, e.g., \citealt{hastie/etal:2009}, Section~7.2).
\fi
We consider the ICP whose conformity measure is defined by (\ref{eq:score})
where $f$ is output by MART and
\begin{equation}\label{eq:Delta}
  \Delta(y,f(x))
  :=
  \begin{cases}
    f(x) & \text{if $y=1$}\\
    -f(x) & \text{if $y=0$}.
  \end{cases}
\end{equation}
MART's output $f(x)$ models the log-odds of \texttt{spam} vs \texttt{email},
\begin{equation*} % \label{eq:odds}
  f(x)
  =
  \log\frac
  {P(1\mid x)}
  {P(0\mid x)},
\end{equation*}
which makes the interpretation of (\ref{eq:Delta}) as conformity score very natural.

\ifnotCONF
  The R programs used in the experiments described in this section are available
  from the web site \url{http://alrw.net};
  the programs use the \texttt{gbm} package
  with virtually all parameters set to the default values
  (given in the description provided in response to \verb'help("gbm")').
\fi

The upper left plot in Figure~\ref{fig:scatter}
is the scatter plot of the pairs $(p^{{\rm email}},p^{{\rm spam}})$
produced by the ICP for all examples in the test set.
Email is shown as green noughts and spam as red crosses
(and it is noticeable that the noughts were drawn after the crosses).
The other two plots in the upper row are for email and spam separately.
Ideally, email should be close to the horizontal axis and spam to the vertical axis;
we can see that this is often true, with a few exceptions.
The picture for the label conditional ICP looks almost identical:
see the lower row of Figure~\ref{fig:scatter}.
\ifnotCONF
  However, on the log scale the difference becomes more noticeable:
  see Figure~\ref{fig:scatter_log}.
\fi

\begin{figure}[tb]
\begin{center}
  \includegraphics[width=\columnwidth]{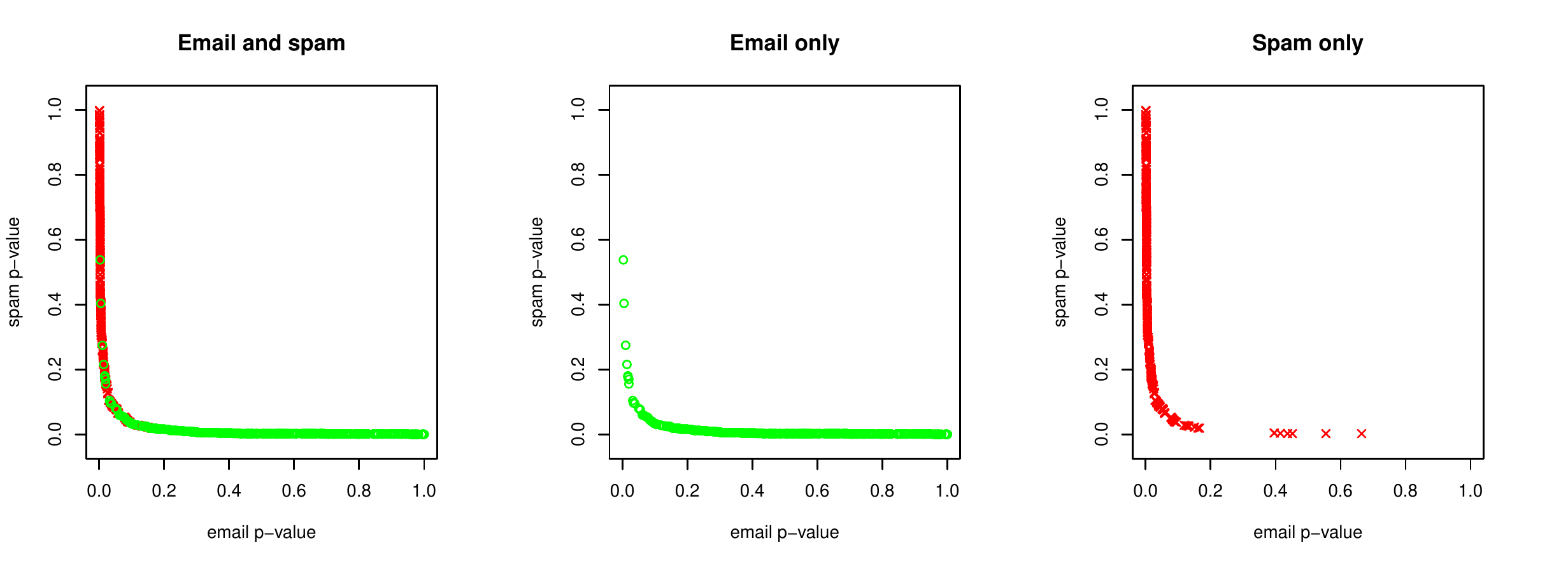}\\
  \includegraphics[width=\columnwidth]{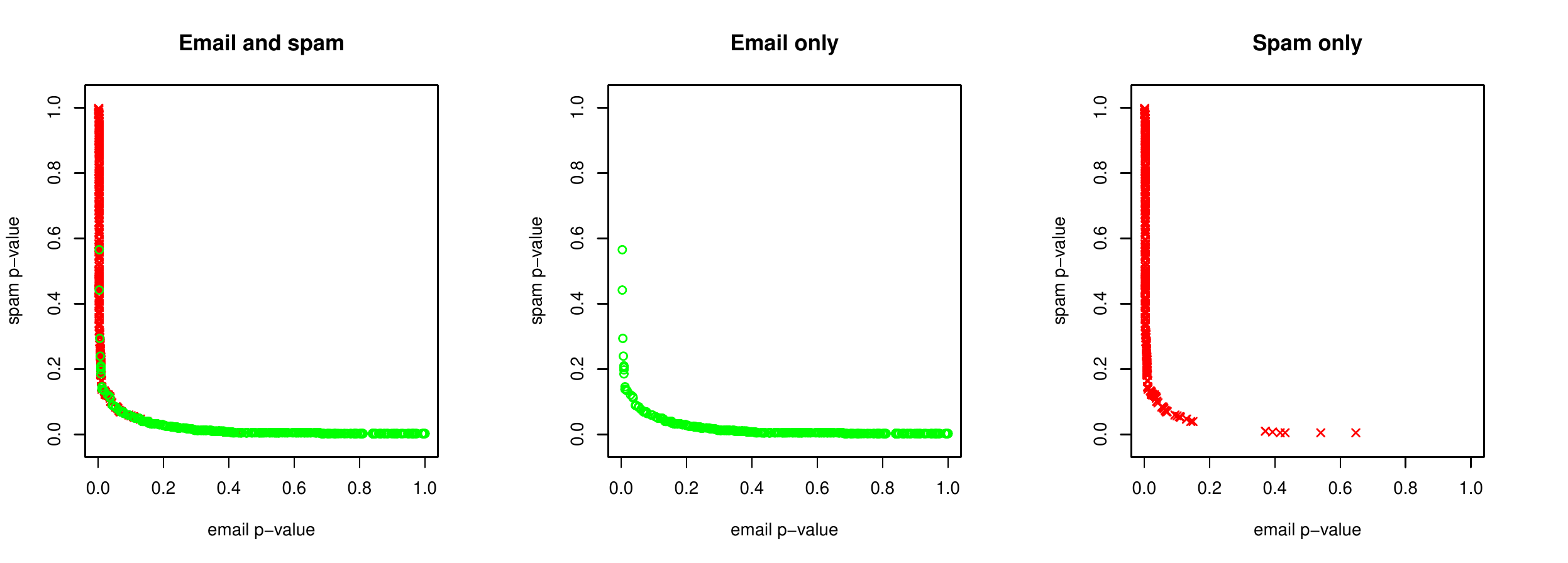}
\end{center}

\vspace{-7mm}

\caption{Scatter plots of the pairs $(p^{{\rm email}},p^{{\rm spam}})$
for all examples in the test set (left plots),
for email only (middle), and for spam only (right).
The three upper plots are for the ICP and the three lower ones are for the label conditional ICP.}
\label{fig:scatter}
\end{figure}

\ifnotCONF
  \begin{figure}[tb]
  \begin{center}
    \includegraphics[width=\columnwidth]{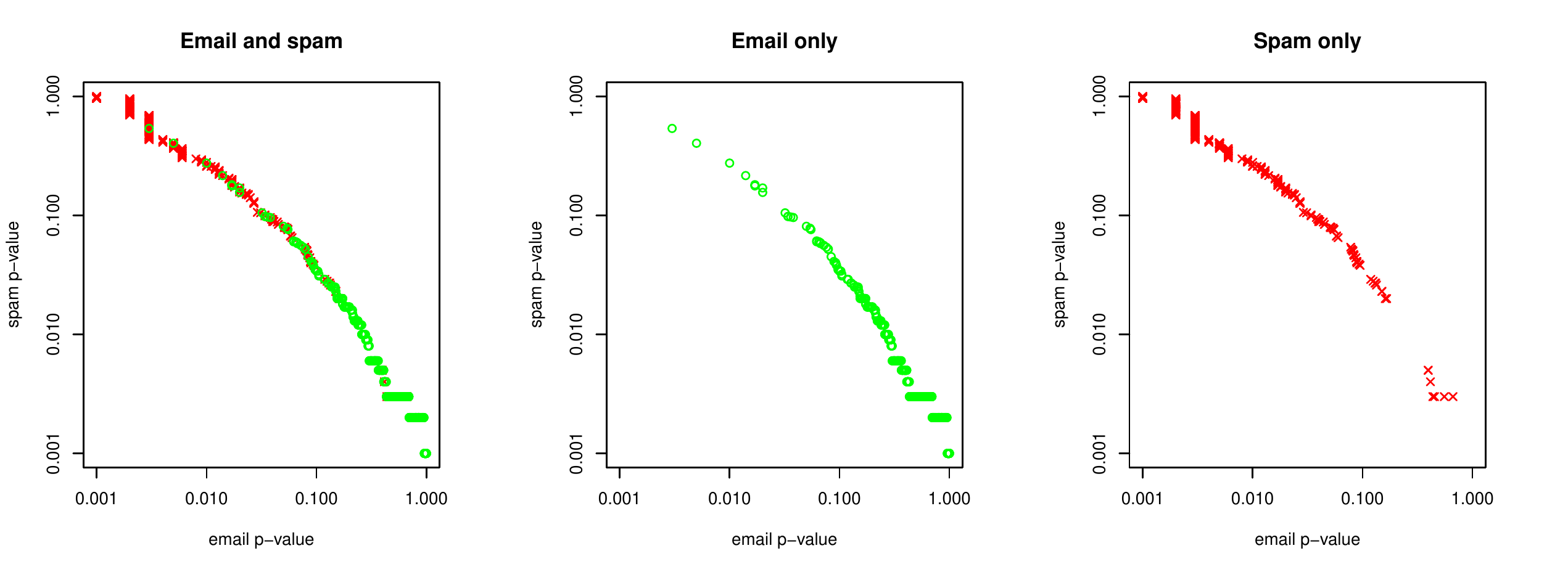}\\%
    \includegraphics[width=\columnwidth]{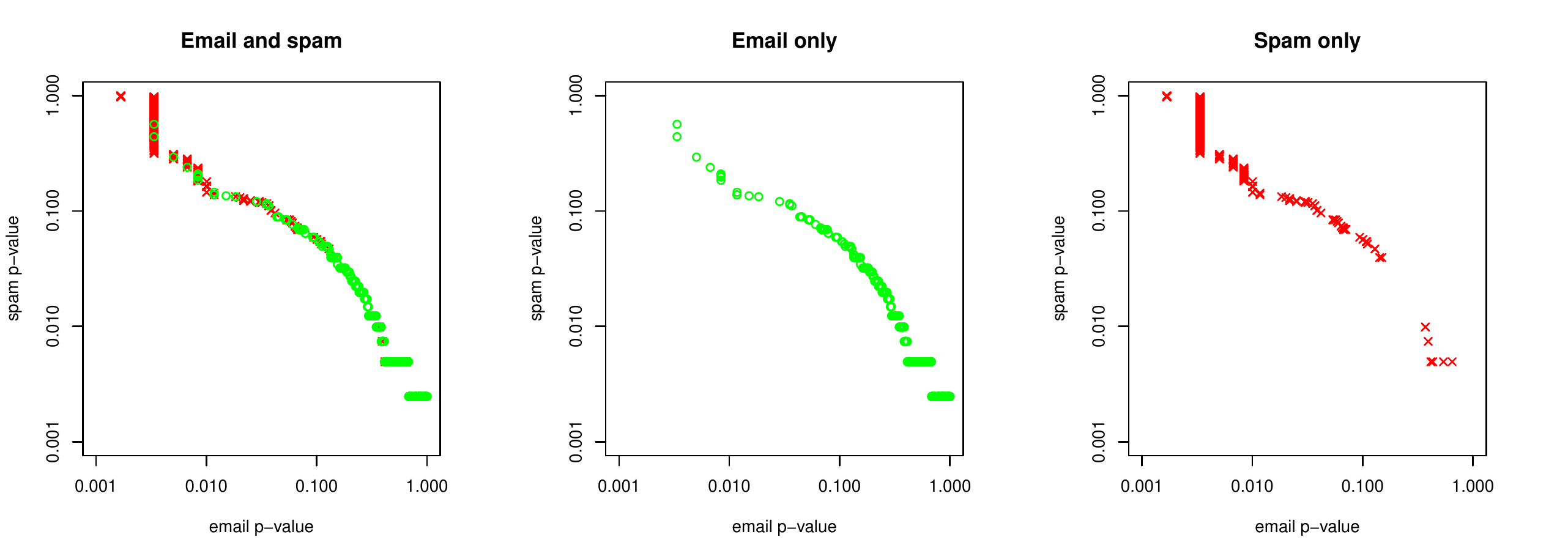}
  \end{center}

  \vspace{-7mm}

  \caption{The analogue of Figure~\ref{fig:scatter} on the log scale.}
  \label{fig:scatter_log}
  \end{figure}
\fi

Table~\ref{tab:statistics-ICP} gives some statistics for the numbers of errors,
multiple, and empty set predictions in the case of the (unconditional) ICP $\Gamma^{5\%}$
at significance level $5\%$
(we obtain different numbers not only because of different splits but also because MART is randomized;
the columns of the table correspond to the pseudorandom number generator seeds 0, 1, 2, etc.).
The table demonstrates the validity, (lack of) conditional validity, and efficiency of the algorithm
(the latter is of course inherited from the efficiency of MART).
We give two kinds of conditional figures:
the percentages of errors, multiple, and empty predictions for different labels
and for two different kinds of objects.
The two kinds of objects are obtained by splitting the object space $\mathbf{X}$
by the value of an attribute that we denote $\$$:
it shows the percentage of the character $\$$ in the text of the message.
The condition $\$<5.55\%$ was the root of the decision tree chosen
both by Hastie et al.\ (\citeyear{hastie/etal:2009}, Section~9.2.5),
who use all attributes in their analysis,
and by Maindonald and Braun (\citeyear{maindonald/braun:2007}, Chapter~11),
who use 6 attributes chosen by them manually.
(Both books use the \texttt{rpart} R package for decision trees.)

Notice that the numbers of errors, multiple predictions, and empty predictions
tend to be greater for spam than for email.
Somewhat counter-intuitively, they also tend to be greater
for ``email-like'' objects containing few $\$$ characters
than for ``spam-like'' objects.
The percentage of multiple and empty predictions is relatively small
since the error rate of the underlying predictor happens to be close
to our significance level of $5\%$.

In practice, using a fixed significance level (such as the standard $5\%$) is not a good idea;
we should at least pay attention to what happens at several significance levels.
However, experimenting with prediction sets at a fixed significance level
facilitates a comparison with theoretical results.

\begin{table}[tb]
%\begin{center}
\hspace*{-2mm}{\small\begin{tabular}{lrrrrrrrrr}
  RNG seed        & 0 & 1 & 2 & 3 & 4 & 5 & 6 & 7 & Average\\
  \hline
  errors overall  & 4.1\% & 6.9\% & 4.6\% & 5.4\% & 5.3\% & 6.1\% & 7.7\% & 5.9\% & 5.75\%\\
  \quad for email & 2.44\% & 4.61\% & 2.26\% & 3.10\% & 4.49\% & 3.98\% & 5.02\% & 3.22\% & 3.64\%\\
  \quad for spam  & 6.77\% & 10.43\% & 8.42\% & 9.02\% & 6.53\% & 9.32\% & 11.69\% & 10.29\% & 9.06\%\\
  \quad for $\$<5.55\%$  & 4.36\% & 7.91\% & 5.15\% & 6.21\% & 6.27\% & 7.89\% & 8.79\% & 7.04\% & 6.70\%\\
  \quad for $\$>5.55\%$  & 3.29\% & 4.12\% & 2.69\% & 2.64\% & 2.40\% & 1.13\% & 4.42\% & 2.15\% & 2.86\%\\
  \hline
  multiple overall & 2.7\% & 0\% & 0.1\% & 0\% & 0\% & 0.5\% & 0\% & 0\% & 0.41\%\\
  \quad for email  & 2.11\% & 0\% & 0.16\% & 0\% & 0\% & 0.33\% & 0\% & 0\% & 0.33\%\\
  \quad for spam   & 3.65\% & 0\% & 0\% & 0\% & 0\% & 0.76\% & 0\% & 0\% & 0.55\%\\
  \quad for $\$<5.55\%$  & 3.04\% & 0\% & 0.13\% & 0\% & 0\% & 0.68\% & 0\% & 0\% & 0.48\%\\
  \quad for $\$>5.55\%$  & 1.65\% & 0\% & 0\% & 0\% & 0\% & 0\% & 0\% & 0\% & 0.21\%\\
  \hline
  empty overall    & 0\% & 2.7\% & 0\% & 1.2\% & 0.8\% & 0\% & 2.5\% & 0.4\% & 0.95\%\\
  \quad for email  & 0\% & 1.48\% & 0\% & 0.65\% & 0.83\% & 0\% & 1.51\% & 0.64\% & 0.64\%\\
  \quad for spam   & 0\% & 4.58\% & 0\% & 2.06\% & 0.75\% & 0\% & 3.98\% & 0\% & 1.42\%\\
  \quad for $\$<5.55\%$  & 0\% & 3.14\% & 0\% & 1.55\% & 0.80\% & 0\% & 3.06\% & 0.52\% & 1.13\%\\
  \quad for $\$>5.55\%$  & 0\% & 1.50\% & 0\% & 0\% & 0.80\% & 0\% & 0.80\% & 0\% & 0.39\%\\
  \hline
\end{tabular}}
%\end{center}
\caption{Percentage of errors, multiple predictions, and empty predictions on the full test set
  and separately on email and spam.
  The results are given for various values of the seed for the R (pseudo)random number generator (RNG);
  column ``Average'' gives the average values for all 8 seeds 0--7.}
\label{tab:statistics-ICP}
\end{table}

Table~\ref{tab:statistics-LCICP} gives similar statistics in the case of the label conditional ICP.
The error rates are now about equal for email and spam, as expected.
We refrain from giving similar predictable results for ``object conditional'' ICP
with $\$<5.55\%$ and $\$>5.55\%$ as categories.

\begin{table}[tb]
%\begin{center}
\hspace*{-2mm}{\small\begin{tabular}{lrrrrrrrrr}
  RNG seed        & 0 & 1 & 2 & 3 & 4 & 5 & 6 & 7 & Average\\
  \hline
  errors overall  & 3.4\% & 6.0\% & 3.8\% & 4.8\% & 5.7\% & 5.3\% & 6.5\% & 5.4\% & 5.11\%\\
  \quad for email & 3.73\% & 6.92\% & 3.87\% & 4.90\% & 6.64\% & 4.98\% & 5.85\% & 3.86\% & 5.10\%\\
  \quad for spam  & 2.86\% & 4.58\% & 3.68\% & 4.64\% & 4.27\% & 5.79\% & 7.46\% & 7.92\% & 5.15\%\\
  \hline
  multiple overall & 4.2\% & 0\% & 4.0\% & 0\% & 0\% & 0.5\% & 0\% & 0.5\% & 1.15\%\\
  \quad for email  & 3.90\% & 0\% & 5.48\% & 0\% & 0\% & 0.66\% & 0\% & 0.48\% & 1.32\%\\
  \quad for spam   & 4.69\% & 0\% & 1.58\% & 0\% & 0\% & 0.25\% & 0\% & 0.53\% & 0.88\%\\
  \hline
  empty overall    & 0\% & 1.0\% & 0\% & 0\% & 0.6\% & 0\% & 1.0\% & 0\% & 0.33\%\\
  \quad for email  & 0\% & 1.48\% & 0\% & 0\% & 0.83\% & 0\% & 0.67\% & 0\% & 0.37\%\\
  \quad for spam   & 0\% & 0.25\% & 0\% & 0\% & 0.25\% & 0\% & 1.49\% & 0\% & 0.25\%\\
  \hline
\end{tabular}}
%\end{center}
\caption{The analogue of a subset of Table~\ref{tab:statistics-ICP} in the case of the label conditional ICP.}
\label{tab:statistics-LCICP}
\end{table}

Figure~\ref{fig:calibration_ICP} gives the calibration plots of the ICP for the test set.
It shows approximate validity even for email and spam separately,
except for the all-important lower-left corners.
The latter are shown separately in Figure~\ref{fig:calibration_ICP_LL},
where the lack of conditional validity becomes evident;
cf.\ Figure~\ref{fig:calibration_LCICP_LL} for the label conditional ICP.

\begin{figure}[tb]
\begin{center}
  \includegraphics[width=\columnwidth]{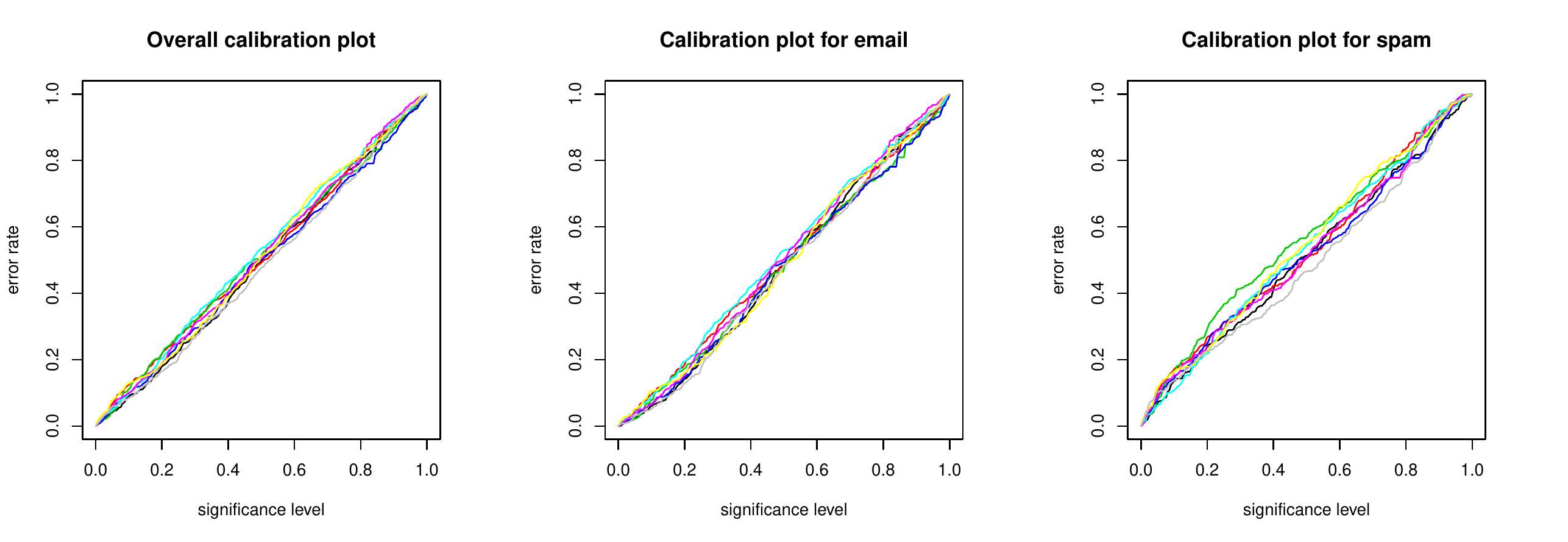}
\end{center}

\vspace{-7mm}

\caption{The calibration plot for the test set overall, the email in the test set, and the spam in the test set
  (for the first 8 seeds, 0--7).}
\label{fig:calibration_ICP}
\end{figure}

\begin{figure}[tb]
\begin{center}
  \includegraphics[width=\columnwidth]{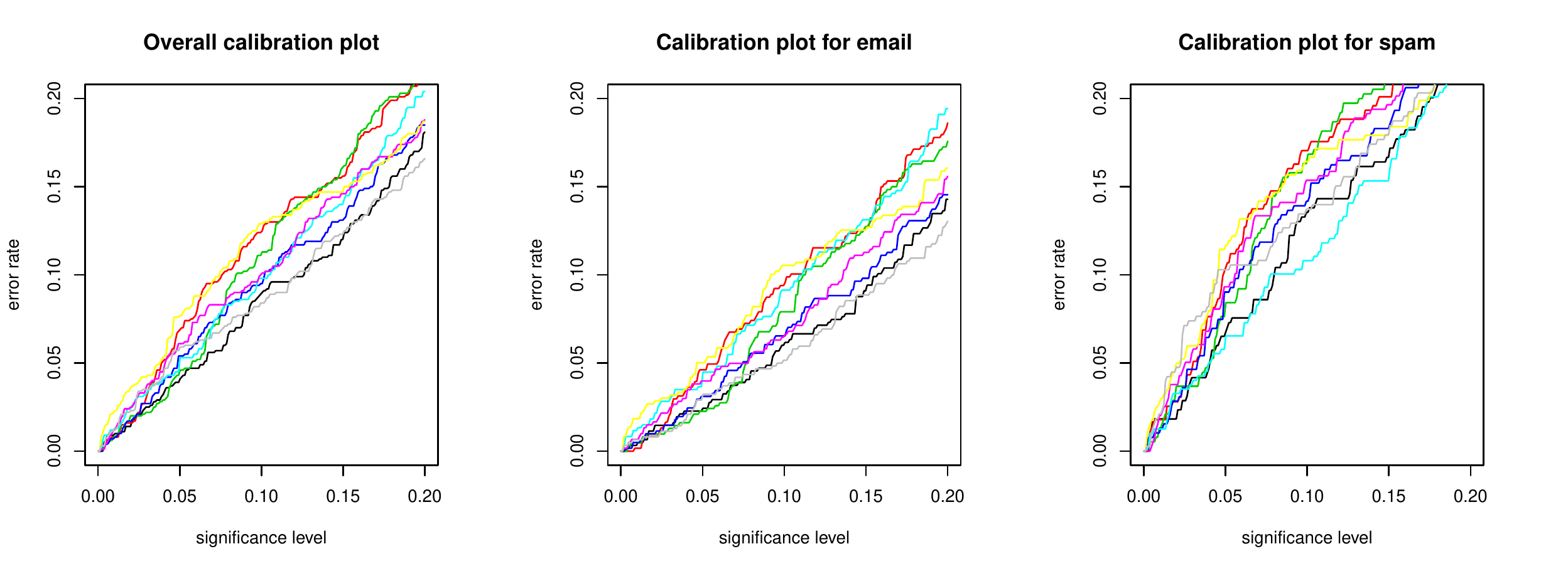}
\end{center}

\vspace{-7mm}

\caption{The lower left corners of the plots in Figure~\ref{fig:calibration_ICP}.}
\label{fig:calibration_ICP_LL}
\end{figure}

\begin{figure}[tb]
\begin{center}
  \includegraphics[width=\columnwidth]{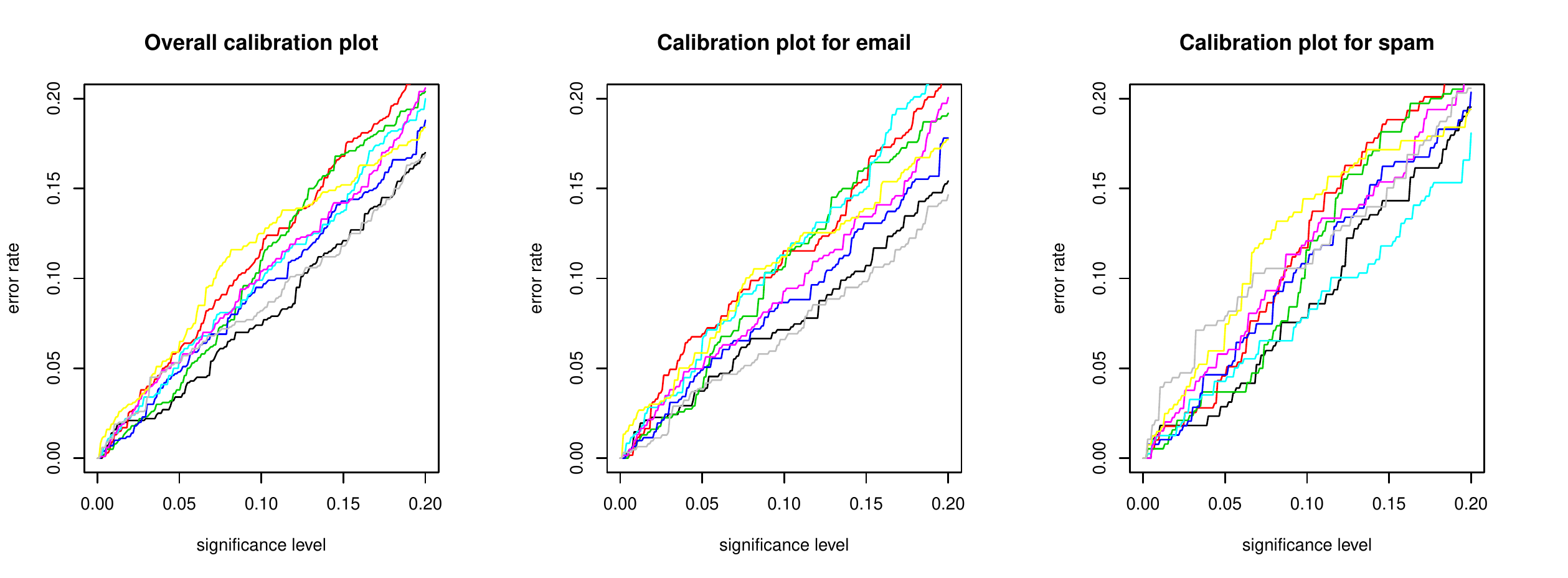}
\end{center}

\vspace{-7mm}

\caption{The analogue of Figure~\ref{fig:calibration_ICP_LL} for the label conditional ICP.}
\label{fig:calibration_LCICP_LL}
\end{figure}

From the numbers given in the ``errors overall'' row of Table~\ref{tab:statistics-ICP}
we can extract the corresponding confidence intervals for the probability of error
conditional on the training set and MART's internal coin tosses;
these are shown in Figure~\ref{fig:train_cond_exper}.
It can be seen that training conditional validity is not grossly violated.
(Notice that the 8 training sets used for producing this figure are not completely independent.
Besides, the assumption of randomness might not be completely satisfied:
permuting the data set ensures exchangeability but not necessarily randomness.)
It is instructive to compare Figure~\ref{fig:train_cond_exper}
with the ``theoretical'' Figure~\ref{fig:train_cond_theor}
obtained from Propositions~\ref{prop:2-parameter-exact} (the thick blue line)
and ~\ref{prop:2-parameter} (the thin red line).
The dotted green line corresponds to the significance level $5\%$,
and the black dot roughly corresponds to the maximal expected probability of error
among 8 randomly chosen training sets.
(It might appear that there is a discrepancy between Figures~\ref{fig:train_cond_exper}
and \ref{fig:train_cond_theor},
but choosing different seeds usually leads to smaller numbers of errors
than in Figure~\ref{fig:train_cond_exper}.)

\begin{figure}[tb]
\begin{center}
  \includegraphics[width=0.4\columnwidth]{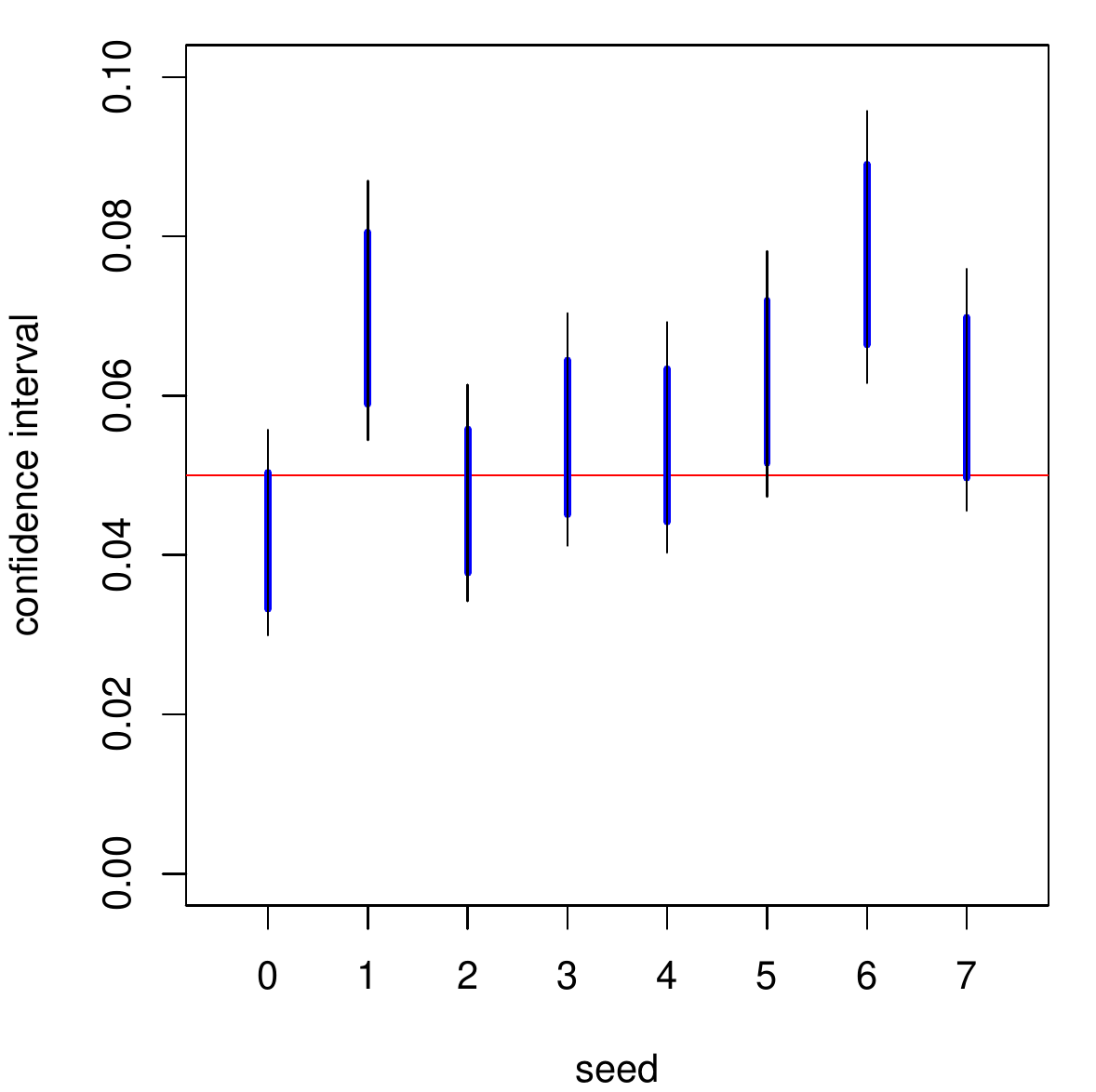}
\end{center}

\vspace{-7mm}

\caption{Confidence intervals for training conditional error probabilities:
  $95\%$ in black (thin lines) and $80\%$ in blue (thick lines).
  The $5\%$ significance level is shown as the horizontal red line.}
\label{fig:train_cond_exper}
\end{figure}

\begin{figure}[tb]
\begin{center}
  \includegraphics[width=0.4\columnwidth]{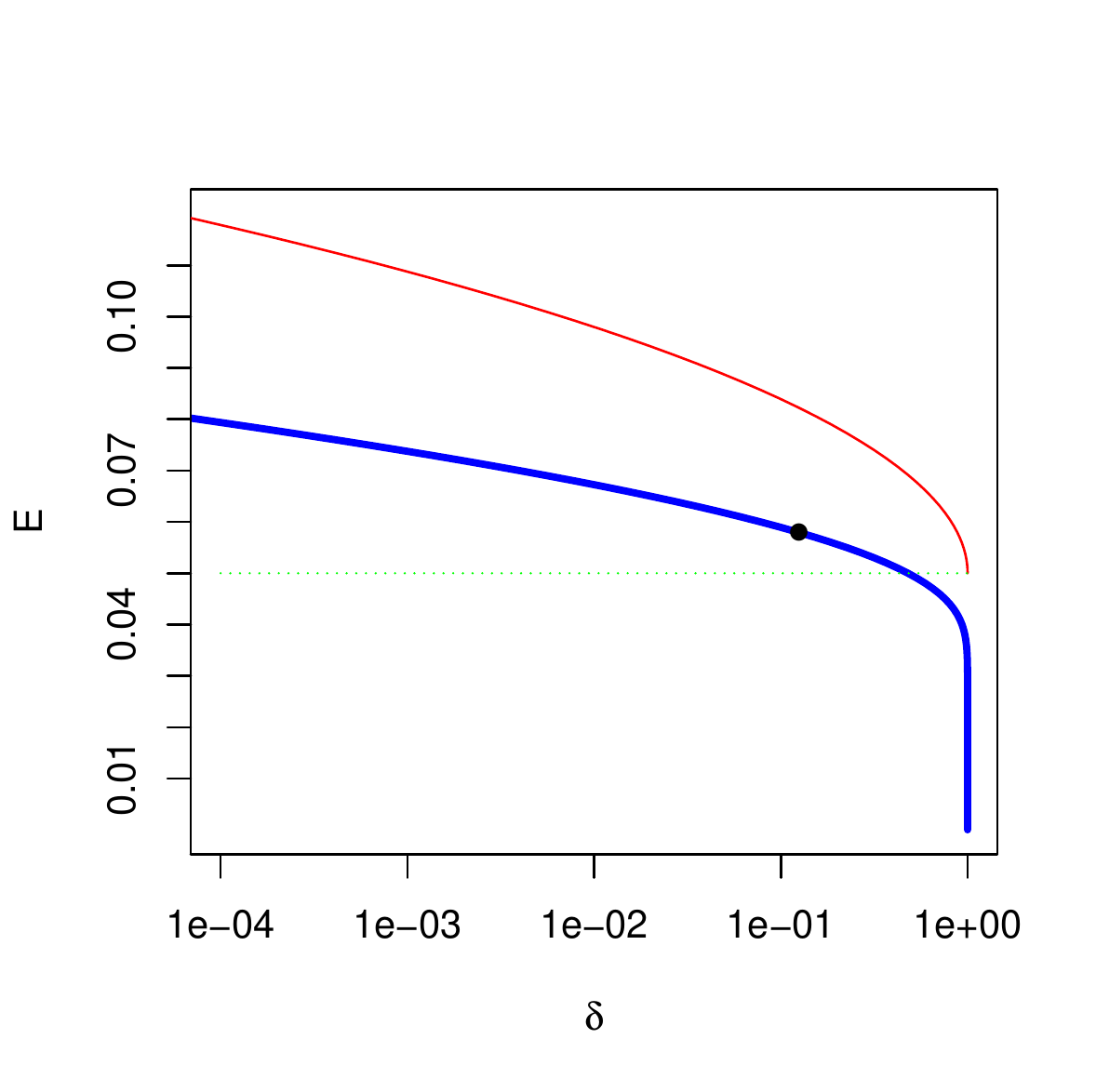}
\end{center}

\vspace{-10mm}

\caption{The probability of error $E$ vs $\delta$
from Propositions~\ref{prop:2-parameter-exact} (the thick blue line)
and ~\ref{prop:2-parameter} (the thin red line), where $\epsilon=0.05$ and $n=999$.}
\label{fig:train_cond_theor}
\end{figure}

\section{ICPs and ROC curves}
\label{sec:ROC}

This section will discuss a close connection between an important class of ICPs
(``probability-type'' label conditional ICPs) and ROC curves.
(For a previous study of connection between conformal prediction and ROC curves,
see \ifCONF\citealt{vanderlooy/sk:2007-short}\fi\ifnotCONF\citealt{vanderlooy/sk:2007}\fi.)
Let us say that an ICP or a label conditional ICP is \emph{probability-type} if its inductive conformity measure
is defined by (\ref{eq:score}) where $f$ takes values in $\mathbb{R}$
and $\Delta$ is defined by (\ref{eq:Delta}).
% $\Delta(1,v)$ increases in $v$ and $\Delta(0,v)$ decreases in $v$

The reader might have noticed
that the two leftmost plots in Figure~\ref{fig:scatter} look similar to a ROC curve.
% reflected in the line $y=1/2$
The following proposition will show that this is not coincidental in the case of the lower left one.
However, before we state it, we need a few definitions.
We will now consider a general binary classification problem
and will denote the labels
% interchangeably as \texttt{email} and \texttt{spam} or
as 0 and 1.
For a threshold $c\in\mathbb{R}$,
the \emph{type I error on the calibration set} is
\begin{equation}\label{eq:I}
  \alpha(c)
  :=
  \frac
  {\{i=m+1,\ldots,l\mid f(x_i)\ge c\;\&\;y_i=0\}}
  {\{i=m+1,\ldots,l\mid y_i=0\}}
\end{equation}
and the \emph{type II error on the calibration set} is
\begin{equation}\label{eq:II}
  \beta(c)
  :=
  \frac
  {\{i=m+1,\ldots,l\mid f(x_i)\le c\;\&\;y_i=1\}}
  {\{i=m+1,\ldots,l\mid y_i=1\}}
\end{equation}
(with $0/0$ set, e.g., to $1/2$).
Intuitively,
these are the error rates for the classifier that predicts $1$ when $f(x)>c$ and predicts $0$ when $f(x)<c$;
our definition is conservative in that it counts the prediction as error whenever $f(x)=c$.
The \emph{ROC curve} is the parametric curve
\begin{equation}\label{eq:ROC}
  \{(\alpha(c),\beta(c))\mid c\in\mathbb{R}\}
  \subseteq
  [0,1]^2.
\end{equation}
(Our version of ROC curves is the original version reflected in the line $y=1/2$;
in our sloppy terminology we follow \citealt{hastie/etal:2009},
whose version is the original one reflected in the line $x=1/2$,
and many other books and papers\ifnotCONF;
  see, e.g., \citealt{bengio/etal:2005}, Figure~1\fi.)

\begin{proposition}\label{prop:ROC}
  In the case of a probability-type label conditional ICP,
  for any object $x\in\mathbf{X}$,
  the distance between the pair $(p^0,p^1)$ (see (\ref{eq:p-cond}))
  and the ROC curve is at most
  \begin{equation}\label{eq:distance}
    \sqrt
    {
      \frac{1}{(n^0+1)^2}
      +
      \frac{1}{(n^1+1)^2}
    },
  \end{equation}
  where $n^y$ is the number of examples in the calibration set labelled as $y$.
\end{proposition}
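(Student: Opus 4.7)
The plan is to show that the pair $(p^0,p^1)$ is close to the specific ROC-curve point obtained by choosing the threshold $c=f(x)$; once this is established the general claim about distance to the curve follows immediately by taking an infimum.

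First I would unpack the conformity scores for a probability-type label conditional ICP. Because $K(\cdot,(x,y)):=y$, the sum in (\ref{eq:p-cond}) is restricted to calibration examples sharing the label of the hypothetical extension. Combining this with (\ref{eq:score}) and (\ref{eq:Delta}), the inequality $\alpha_i\le\alpha^1$ for $y_i=1$ becomes $f(x_i)\le f(x)$, and the inequality $\alpha_i\le\alpha^0$ for $y_i=0$ becomes $f(x_i)\ge f(x)$. Setting $k_1:=|\{i:y_i=1,\,f(x_i)\le f(x)\}|$ and $k_0:=|\{i:y_i=0,\,f(x_i)\ge f(x)\}|$, the p-values therefore simplify to
\begin{equation*}
  p^1=\frac{k_1+1}{n^1+1},\qquad p^0=\frac{k_0+1}{n^0+1}.
\end{equation*}

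Next I would compare these to the corresponding entries of the ROC curve. Evaluating (\ref{eq:I}) and (\ref{eq:II}) at $c=f(x)$ gives $\alpha(f(x))=k_0/n^0$ and $\beta(f(x))=k_1/n^1$ (the boundary cases $n^0=0$ or $n^1=0$ trivialize the problem and can be set aside). The component-wise differences then satisfy
\begin{equation*}
  \left|p^0-\alpha(f(x))\right|
  =\left|\frac{k_0+1}{n^0+1}-\frac{k_0}{n^0}\right|
  =\frac{n^0-k_0}{n^0(n^0+1)}
  \le\frac{1}{n^0+1},
\end{equation*}
since $0\le k_0\le n^0$, and analogously $\left|p^1-\beta(f(x))\right|\le 1/(n^1+1)$.

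The Pythagorean bound then yields that the Euclidean distance from $(p^0,p^1)$ to the point $(\alpha(f(x)),\beta(f(x)))$, which lies on the ROC curve (\ref{eq:ROC}), is at most the quantity in (\ref{eq:distance}), from which the claim follows. There is no genuine obstacle here: the whole argument is essentially bookkeeping to see that the $+1$'s in the numerator and denominator of the conformal p-value definition perturb the corresponding empirical rate $k_y/n^y$ by at most $1/(n^y+1)$. The only points worth a sentence in the final write-up are the reduction to $c=f(x)$ (justifying why we do not need to search over $c$) and the degenerate $n^y=0$ case.
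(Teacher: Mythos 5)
Your proof is correct and follows essentially the same route as the paper's: set $c:=f(x)$, identify $(p^0,p^1)$ with $\bigl((k_0+1)/(n^0+1),(k_1+1)/(n^1+1)\bigr)$, and observe that the ROC-curve point $(k_0/n^0,k_1/n^1)$ is within $1/(n^y+1)$ of it in each coordinate. The only difference is that you spell out the elementary computation $\bigl|\frac{k_0+1}{n^0+1}-\frac{k_0}{n^0}\bigr|=\frac{n^0-k_0}{n^0(n^0+1)}\le\frac{1}{n^0+1}$, which the paper asserts without detail.
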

\begin{proof}
  Let $c:=f(x)$.
  Then we have
  \begin{equation}\label{eq:p-p}
    (p^0,p^1)
    =
    \left(
      \frac{n^0_{\ge}+1}{n^0+1},
      \frac{n^1_{\le}+1}{n^1+1}
    \right)
  \end{equation}
  where $n^0_{\ge}$ is the number of examples $(x_i,y_i)$ in the calibration set such that $y_i=0$ and $f(x_i)\ge c$
  and $n^1_{\le}$ is the number of examples in the calibration set such that $y_i=1$ and $f(x_i)\le c$.
  It remains to notice that the point
  $
    \left(
      n^0_{\ge}/n^0,
      n^1_{\le}/n^1
    \right)
  $
  belongs to the ROC curve:
  the horizontal (resp.\ vertical) distance between this point and (\ref{eq:p-p})
  does not exceed $1/(n^0+1)$ (resp.\ $1/(n^1+1)$),
  and the overall Euclidean distance does not exceed (\ref{eq:distance}).
\end{proof}

So far we have discussed the \emph{empirical ROC curve}:
(\ref{eq:I}) and (\ref{eq:II}) are the empirical probabilities of errors of the two types
on the calibration set.
It corresponds to the estimate $k/n$ of the parameter of the binomial distribution
based on observing $k$ successes out of $n$.
The minimax estimate is $(k+1/2)/(n+1)$,
and the corresponding ROC curve (\ref{eq:ROC}) where $\alpha(c)$ and $\beta(c)$
are defined by (\ref{eq:I}) and (\ref{eq:II}) with the numerators increased by $\frac12$
and the denominators increased by $1$
will be called the \emph{minimax ROC curve}.
Notice that for the minimax ROC curve we can put a coefficient of $\frac12$ in front of (\ref{eq:distance}).
Similarly, when using the Laplace estimate $(k+1)/(n+2)$,
we obtain the \emph{Laplace ROC curve}.
See Figure~\ref{fig:scatter_ROC} for the lower left corner of the lower left plot of Figure~\ref{fig:scatter}
with different ROC curves added to it.

\begin{figure}[tb]
\begin{center}
  \includegraphics[width=0.4\columnwidth]{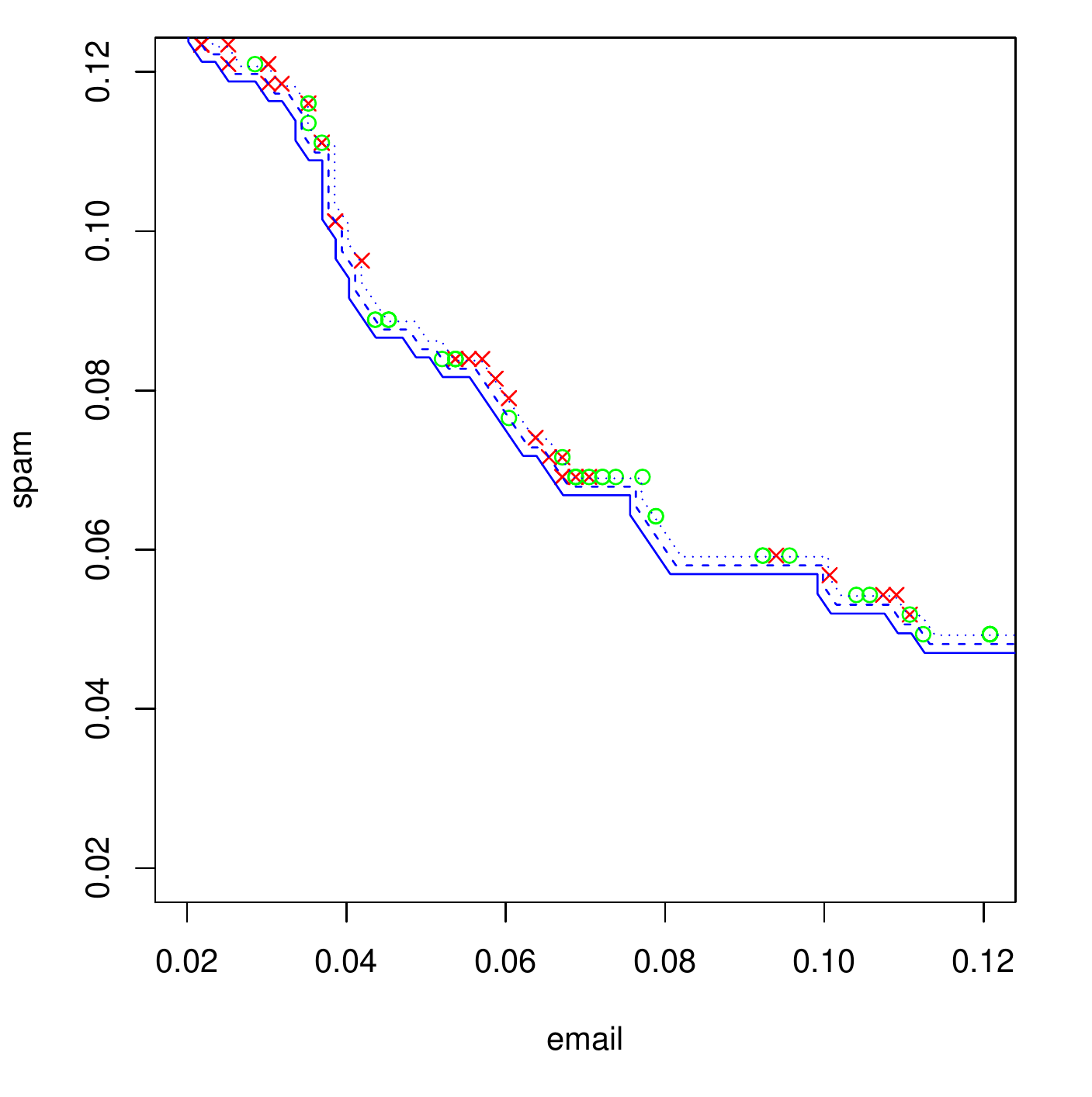}
\end{center}

\vspace{-8mm}

\caption{The lower left corner of the lower left plot of Figure~\ref{fig:scatter}
with the empirical (solid blue), minimax (dashed blue), and Laplace (dotted blue) ROC curves.}
\label{fig:scatter_ROC}
\end{figure}

In conclusion of our study of the \texttt{Spambase} data set,
we will discuss the asymmetry of the two kinds of error in spam detection:
classifying email as \texttt{spam} is much more harmful than letting occasional spam in.
A reasonable approach is to start from a small number $\epsilon>0$,
the maximum tolerable percentage of email classified as \texttt{spam},
and then to try to minimize the percentage of spam classified as \texttt{email} under this constraint.
The standard way of doing this is to classify a message $x$ as \texttt{spam} if and only if $f(x)\ge c$,
where $c$ is the point on the ROC curve corresponding to the type I error $\epsilon$.
It is not clear what this means precisely,
since we only have access to an estimate of the true ROC curve
(and even on the true ROC curve such a point might not exist).
But roughly,
this means classifying $x$ as \texttt{spam} if $f(x)$
exceeds the $k$th largest value in the set $\{\alpha_i\mid i\in\{m+1,\ldots,l\}\;\&\;y_i=\texttt{email}\}$,
where $k$ is close to $\epsilon n^0$ and $n^0$ is the size of this set
(i.e., the number of email in the calibration, or validation, set).
To make this more precise,
we can use the ``one-sided label conditional ICP'' classifying $x$ as \texttt{spam}
if and only if\ifnotCONF
  \footnote{In practice, we might want to improve the predictor
    by adding another step and changing the classification from \texttt{spam} to \texttt{email}
    if $p^1$ is also small,
    in which case $x$ looks neither like spam nor email.
    In view of Proposition~\ref{prop:ROC}, however, this step can be disregarded for probability-type ICP
    unless $\epsilon$ is very lax.}\fi\
$p^0\le\epsilon$ for $x$.
According to (\ref{eq:p-p}),
this means that we classify $x$ as \texttt{spam} if and only if $f(x)$
exceeds the $k$th largest value in the set $\{\alpha_i\mid i\in\{m+1,\ldots,l\}\;\&\;y_i=\texttt{email}\}$,
where $k:=\lfloor\epsilon(n^0+1)\rfloor$.
The advantage of this version of the standard method
is that it guarantees that the probability of mistaking email for spam is at most $\epsilon$
(see Proposition~\ref{prop:validity-ICP-cond})
and also enjoys the training conditional version of this property given by Proposition~\ref{prop:2-parameter}
(more accurately, its version for label conditional ICPs).

\section{Conclusion}
\label{sec:conclusion}

The goal of this paper has been to explore various versions of the requirement of conditional validity.
With a small training set, we have to content ourselves with unconditional validity
(or abandon any formal  requirement of validity altogether).
For bigger training sets training conditional validity will be approached by ICPs automatically,
and we can approach example conditional validity by using conditional ICPs
but making sure that the size of a typical category does not become too small (say, less than 100).
In problems of binary classification,
we can control false positive and false negative rates by using label conditional ICPs.

The known property of validity of inductive conformal predictors (Proposition~\ref{prop:validity-ICP})
can be stated in the traditional statistical language
(see, e.g., \citealt{fraser:1957}\ifnotCONF\
  and \citealt{guttman:1970}\fi)
by saying that they are $1-\epsilon$ expectation tolerance regions,
where $\epsilon$ is the significance level.
In classical statistics, however, there are two kinds of tolerance regions:
$1-\epsilon$ expectation tolerance regions
and PAC-type $1-\delta$ tolerance regions for a proportion $1-\epsilon$,
in the terminology of \citet{fraser:1957}.
% in the terminology of \citet{vovk/etal:2005book}
% One of this paper's results
We have seen (Proposition~\ref{prop:2-parameter}) that inductive conformal predictors
are tolerance regions in the second sense as well
(cf.\ \ifCONF\citealt{vovk:arXiv1209-short}, \fi
Appendix~A).

A disadvantage of inductive conformal predictors is their potential predictive inefficiency:
indeed, the calibration set is wasted
as far as the development of the prediction rule $f$ in (\ref{eq:score}) is concerned,
and the proper training set is wasted
as far as the calibration (\ref{eq:p}) of conformity scores into p-values is concerned.
Conformal predictors use the full training set for both purposes,
and so can be expected to be significantly more efficient.
(There have been reports
% such as Papadopoulos and \citealt{lei/etal:2012},
of comparable and even better predictive efficiency of ICPs as compared to conformal predictors
but they may be unusual artefacts of the methods used and particular data sets.)
It is an open question whether we can guarantee training conditional validity
under (\ref{eq:E}) or a similar condition for conformal predictors
different from classical tolerance regions.
Perhaps no universal results of this kind exist,
and different families of conformal predictors will require different methods.
% Such as Ilia's paper of 1-nearest neighbour conformal predictor.

\ifFULL\bluebegin
  The situation with conditional validity is still very confusing\ldots
\blueend\fi

\ifFULL\bluebegin
  \paragraph{Some open problems}
  \begin{itemize}
  \item
    ICPs are similar to using a validation set to estimate the error probability.
    A more efficient version of the latter is cross-validation
    (\citealt{hastie/etal:2009}, Section~7.10).
    Is it possible to develop ICPs in the direction of cross-validation
    (and still have automatic validity)?
  \item
    Is it possible to have ``soft'' conditional ICPs based on ``soft'' taxonomies
    (instead of the current rigid ones).
  \end{itemize}
\blueend\fi

% Wilks's (\citeyear{wilks:1941}) tolerance intervals are a special case of conformal predictors.
% Unfortunately, we cannot use Wilks's methods to obtain training conditional results
% for general conformal predictors (but his methods are applicable to ICP).

% Can ideas of differential privacy be used?
% (So that adding one test example to the calibration set does not change anything
% with high probability.)

\ifCONF
  \acks\begingroup
\fi

\ifnotCONF
  \subsection*{Acknowledgments}
\fi

The empirical studies described in this paper used the R system % in general
and the \texttt{gbm} package written by Greg Ridgeway\ifnotCONF\
(based on the work of \citealt{freund/schapire:1997} and \citealt{friedman:2001,friedman:2002})\fi.
This work was partially supported by the Cyprus Research Promotion Foundation.
Many thanks to the reviewers \ifnotCONF of the conference version of the paper \fi
for their advice.

\ifCONF
  \endgroup
\fi

\ifnotCONF
\appendix
\section{Training conditional validity for classical tolerance regions}
\label{app:CP}

\iffalse
  In this appendix we will briefly discuss several known results
  about training conditional validity of conformal predictors
  (which can be regarded as very partial answers to the open question posed
  at the end of Section~\ref{sec:conclusion}).
\fi

In this appendix we compare Propositions~\ref{prop:2-parameter} and~\ref{prop:2-parameter-exact}
with the results
(see, e.g., \citealt{fraser:1957} and \citealt{guttman:1970})
about classical tolerance regions
(which are a special case of conformal predictors,
as explained in \citealt{vovk/etal:2005book}, p.~257).
It is well known that under appropriate continuity assumptions the classical tolerance regions
that discard $\epsilon(n+1)$ out of the $n+1$ statistically equivalent blocks
(in this appendix we always assume that $\epsilon(n+1)$ is an integer number)
have coverage probability following the beta distribution
with parameters $(1-\epsilon)(n+1)$ and $\epsilon(n+1)$
(see, e.g., \citealt{tukey:1947} or \citealt{guttman:1970}, Theorems 2.2 and 2.3);
in particular, their expected coverage probability is $1-\epsilon$.
This immediately implies the following corollary:
% \begin{proposition}[\citealt{tukey:1947}]\label{prop:classical}
%   Let $\epsilon,\delta,E\in[0,1]$ and $n$ be a positive integer such that $\epsilon(n+1)$ is integer.
%   Suppose some vague continuity assumptions are satisfied.
  if $\Gamma$ is a classical tolerance predictor with sample size $n$ and expected coverage probability $1-\epsilon$,
  it is $(E,\delta)$-valid if and only if
  \begin{equation}\label{eq:E-classical}
    \delta
    \ge
    \Bet_{(1-\epsilon)(n+1),\epsilon(n+1)}(1-E)
    =
    1-\Bet_{\epsilon(n+1),(1-\epsilon)(n+1)}(E),
  \end{equation}
  where $\Bet_{\alpha,\beta}$ is the cumulative beta distribution function
  with parameters $\alpha$ and $\beta$.
% \end{proposition}

% In the simplest case, taking $r=25$ in Wilks's \citeyear{wilks:1941} tolerance intervals,
% we obtain a conformal predictor with $\epsilon=0.05$
% whose coverage probability has beta distribution with parameters $(950,50)$.
% The plot of $E$ vs $\delta$ is identical to the blue line in Figure~\ref{fig:train_cond_theor}.
% This is not accidental:

The following lemma shows that in fact (\ref{eq:E-classical}) coincides
with the condition (\ref{eq:E-exact}) for ICPs
(under our assumption $\epsilon(n+1)\in\mathbb{Z}$).
Of course, $n$ means different things in (\ref{eq:E-exact}) and (\ref{eq:E-classical}):
the size of the calibration set in the former and the size of the full training set
in the latter.

\begin{lemma}[\url{http://dlmf.nist.gov/8.17.E5}]
  For all $n\in\{1,2,\ldots\}$, all $k\in\{0,1,\ldots,n\}$, and all $E\in[0,1]$,
  \begin{equation}\label{eq:equality}
    \bin_{n,E}(k-1)
    =
    \Bet_{n+1-k,k}(1-E)
    =
    1-\Bet_{k,n+1-k}(E).
  \end{equation}
\end{lemma}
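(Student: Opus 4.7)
The plan is to prove the two equalities in turn. The second equality,
$$
  \Bet_{n+1-k,k}(1-E)
  =
  1-\Bet_{k,n+1-k}(E),
$$
reflects the symmetry of the Beta family: if a random variable $X$ has distribution $\Bet(\alpha,\beta)$, then $1-X$ has distribution $\Bet(\beta,\alpha)$, which is immediate from the substitution $t\mapsto 1-t$ in the integral defining the regularized incomplete beta function. (The edge cases in which one of the parameters equals $0$ are handled by the usual conventions that a Beta distribution with a zero parameter is a unit mass at the corresponding endpoint.)

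For the first equality,
$$
  \bin_{n,E}(k-1)
  =
  \Bet_{n+1-k,k}(1-E),
$$
I would give a short probabilistic proof via uniform order statistics. Let $U_1,\ldots,U_n$ be i.i.d.\ uniform on $[0,1]$ and let $U_{(1)}\le\cdots\le U_{(n)}$ be their order statistics. It is a standard fact (provable by a direct density computation or by a symmetry argument on the $n!$ permutations) that $U_{(k)}\sim\Bet(k,n+1-k)$. The key combinatorial observation is
$$
  \{U_{(k)}\le E\}
  =
  \bigl\{\text{at least }k\text{ of the }U_i\text{ lie in }[0,E]\bigr\},
$$
and the number of $U_i$ falling in $[0,E]$ is binomially distributed with parameters $n$ and $E$. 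Hence
$$
  \Bet_{k,n+1-k}(E)
  =
  \Prob(U_{(k)}\le E)
  =
  1-\bin_{n,E}(k-1),
$$
and combining this with the already-established second equality gives the first.

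The main (and essentially only) step requiring care is the identification $U_{(k)}\sim\Bet(k,n+1-k)$, but this is entirely standard; there is no real obstacle, since the lemma is a classical textbook identity (this is exactly the content of the DLMF entry cited in the statement). As an alternative, one could instead prove the first equality analytically by repeated integration by parts on $\int_0^{1-E}t^{n-k}(1-t)^{k-1}\,dt$, which yields precisely the partial sum $\sum_{j=0}^{k-1}\binom{n}{j}E^{j}(1-E)^{n-j}=\bin_{n,E}(k-1)$; the probabilistic route is however shorter and makes clearer why the identity holds.
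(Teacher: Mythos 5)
Your proposal is correct and is essentially the paper's own proof: the paper likewise treats the second equality as immediate and establishes the first by identifying $1-\Bet_{k,n+1-k}(E)$ with the probability that the $k$th smallest of $n$ i.i.d.\ uniform variables exceeds $E$, which happens exactly when at most $k-1$ of them fall in $[0,E]$. Your added remarks (the explicit $t\mapsto 1-t$ substitution, the degenerate-parameter conventions, and the integration-by-parts alternative) are fine but do not change the argument.
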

\nocite{DLMF}

\begin{proof}
  The equality between the last two terms of (\ref{eq:equality}) is obvious.
  The last term of (\ref{eq:equality}) is the probability
  that the $k$th smallest value in a sample of size $n$ from the uniform probability distribution $U$ on $[0,1]$
  exceeds $E$.
  This event is equivalent to at most $k-1$ of $n$ independent random variables generated from $U$
  belonging to the interval $[0,E]$,
  and so the probability of this event is given by the first term of (\ref{eq:equality}).
\end{proof}

The assumption of continuity was removed by \citet{tukey:1948} and \citet{fraser/wormleighton:1951}.
We will state this result only for the simplest kind of classical tolerance regions,
essentially those introduced by \citet{wilks:1941}
(this special case was obtained already by \citealt{scheffe/tukey:1945}, p.~192).
Suppose the object space $\mathbf{X}$ is a one-element set
and the label space is $\mathbf{Y}=\mathbb{R}$
(therefore, we consider the problem of predicting real numbers without objects).
For two numbers $L\le U$ in the set $\{0,1,\ldots,n+1\}$ consider the set predictor
$[y_{(L)},y_{(U)}]$, where $y_{(i)}$ is the $i$th order statistics
(the $i$th smallest value in the training set $(y_1,\ldots,y_n)$,
except that $y_{(0)}:=-\infty$ and $y_{(n+1)}:=\infty$).
This set predictor is $(E,\delta)$-valid provided we have (\ref{eq:E-classical})
with $\epsilon(n+1)$ replaced by $L+n+1-U$.

It is easy to see that Proposition~\ref{prop:2-parameter-exact}
(and, therefore, Proposition~\ref{prop:2-parameter})
can in fact be deduced from Scheff\'e and Tukey's result.
This follows from the interpretation of inductive conformal predictors
as a ``conditional'' version of Wilks's predictors corresponding to $L:=\epsilon(n+1)$ and $U:=n+1$.
After observing the proper training set we apply Wilks's predictors
to the conformity scores $\alpha_i$ of the calibration examples
to predict the conformity score of a test example;
the set prediction of the conformity score for the test object
is transformed into the prediction set consisting of the labels leading to a score in the predicted range.
\fi

\iffalse
We can add one more open problem to the list in Section~\ref{sec:conclusion}:
give a simple characterization of classical tolerance predictors
(i.e., those whose coverage probability is distributed according to the beta distribution
in the Scheff\'e--Tukey \citeyear{scheffe/tukey:1945} sense)
in the class of all conformal predictors.

Next we discuss Ilia Nouretdinov's result in \citet{nouretdinov:2008}, Theorem~1
% \citealt{nouretdinov/gammerman:2012}
about the 1-Nearest Neighbour conformal predictor.
Nouretdinov's result is asymptotic, involving the term $o(1)$.
A further complication is that it contains an error (Nouretdinov, private communication):
the proof of Corollary~2 applies Hoeffding's inequality in a wrong way
(the $e^{-m}/\epsilon$ in the last line of the proof should be $e^{-\epsilon^2m/2}$).
In the case of the 1-Nearest Neighbours conformal predictor,
Nouretdinov's corrected result replaces (\ref{eq:E}) by
\begin{equation}\label{eq:N}
  E
  \ge
  \epsilon + 6^{1/3}\frac{\ln^{2/3}n}{(n\delta)^{1/3}},
\end{equation}
in our notation and ignoring the $o(1)$ term (i.e., replacing it by $0$),
where $n$ is the size of the training set.
The dependence on $\delta$ is much worse in (\ref{eq:N}) than in (\ref{eq:E}),
and the dependence on $n$ is also somewhat worse.

Compare with \citet{campi:2010}.
\fi
\end{document}